\newcolumntype{H}{>{\setbox0=\hbox\bgroup}c<{\egroup}@{}}
\newcommand{\Acal}{\mathcal{A}}
\newcommand{\Gcal}{\mathcal{G}}
\newcommand{\Pcal}{\mathcal{P}}
\newcommand{\Ical}{\mathcal{I}}
\newcommand{\Mcal}{\mathcal{M}}
\newcommand{\Ocal}{\mathcal{O}}
\newcommand{\Ucal}{\mathcal{U}}
\newcommand{\EE}{\mathbb{E}}
\newcommand{\RR}{\mathbb{R}}
\newcommand{\II}{\mathbb{I}}
\newcommand{\PP}{\mathbb{P}}
\newcommand{\NN}{\mathbb{N}}
\newcommand{\fhat}{\hat{f}}
\newcommand{\Fbar}{\bar{F}}
\newcommand{\cb}{\mathbf{c}}
\newcommand{\gb}{\mathbf{g}}
\renewcommand{\sb}{\mathbf{s}}
\newcommand{\ub}{\mathbf{u}}
\newcommand{\wb}{\mathbf{w}}
\newcommand{\xb}{\mathbf{x}}
\newcommand{\yb}{\mathbf{y}}
\newcommand{\zb}{\mathbf{z}}
\newcommand{\one}{\mathbf{1}}
\newcommand{\zero}{\mathbf{0}}
\newcommand{\bbeta}{\bm{\beta}}
\newcommand{\bgamma}{\bm{\gamma}}
\newcommand{\tnorm}[1]{\lVert #1 \rVert_2}
\newcommand{\tgnorm}[1]{\lVert #1 \rVert_G}
\newcommand*{\argmin}{\mathop{\mathrm{argmin}}}
\newtheorem{theorem}{Theorem}
\newtheorem{lemma}[theorem]{Lemma}
\newtheorem{proposition}[theorem]{Proposition}
\newcommand{\BlackBox}{\rule{1.5ex}{1.5ex}}  
\newenvironment{proof}{\par\noindent{\bf Proof\ }}{\hfill\BlackBox\\[2mm]}
\newcommand{\ubar}[1]{\underaccent{\bar}{#1}}
\title{Stochastic Submodular Maximization:\\The Case of Coverage Functions}
\author{
  Mohammad Reza Karimi\\
  Department of Computer Science\\
  ETH Zurich\\
  \texttt{mkarimi@ethz.ch} \\
  \And
  Mario Lucic \\
  Department of Computer Science \\
  ETH Zurich\\
  \texttt{lucic@inf.ethz.ch} \\
  \AND
  Hamed Hassani \\
  Department of Electrical and Systems Engineering \\
  University of Pennsylvania \\
  \texttt{hassani@seas.upenn.edu} \\
  \And
  Andreas Krause \\
  Department of Computer Science \\
  ETH Zurich \\
  \texttt{krausea@ethz.ch}\\
}
\begin{document}
\maketitle

\begin{abstract}
Stochastic optimization of {\em continuous} objectives is at the heart of modern machine learning.  However, many important problems are of {\em discrete} nature and often involve {\em submodular} objectives. We seek to unleash the power of stochastic continuous optimization, namely stochastic gradient descent and its variants, to such discrete problems. We first introduce the problem of \emph{stochastic submodular optimization}, where one needs to optimize a submodular objective which is given as an expectation. Our model captures situations where the discrete objective arises as an empirical risk (e.g., in the case of exemplar-based clustering), or is given as an explicit stochastic model (e.g., in the case of influence maximization in social networks).  By exploiting that common extensions {\em act linearly} on the class of submodular functions, we employ projected stochastic gradient ascent and its variants in the continuous domain, and perform rounding to obtain discrete solutions. We focus on the rich and widely used family of weighted coverage functions. We show that our approach yields solutions that are guaranteed to match the optimal approximation guarantees, while reducing the computational cost by several orders of magnitude, as we demonstrate empirically.
\end{abstract}

\section{Introduction}
Submodular functions are discrete analogs of convex functions. They arise naturally in many areas, such as the study of graphs, matroids, covering problems, and facility location problems. These functions are extensively studied in operations research and combinatorial optimization \citep{krause2012submodular}. Recently, submodular functions have proven to be key concepts in other areas such as machine learning, algorithmic game theory, and social sciences. As such, they have been applied to a host of important problems such as modeling valuation functions in combinatorial auctions, feature and variable selection \cite{krause05near}, data summarization \cite{lin2011class}, and influence maximization \cite{kempe}.

Classical results in submodular optimization consider the {\em oracle model} whereby the access to the optimization objective is provided through a black box --- an oracle. However, in many applications, the objective has to be estimated from data and is subject to stochastic fluctuations. In other cases the value of the objective may only be obtained through simulation. As such, the exact computation might not be feasible due to statistical or computational constraints. As a concrete example, consider the problem of {\em influence maximization} in social networks \cite{kempe}. The objective function is defined as the expectation of a stochastic process, quantifying the size of the (random) subset of nodes influenced from a selected seed set. This expectation cannot be computed efficiently, and is typically approximated via random sampling, which introduces an  error in the estimate of the value of a seed set. Another practical example is the {\em exemplar-based clustering} problem, which is an instance of the {\em facility location} problem. Here, the objective is the sum of similarities of all the points inside a (large) collection of data points to a selected set of centers. Given a distribution over point locations, the true objective is defined as the expected value w.r.t.~this distribution, and can only be approximated as a sample average. Moreover, evaluating the function on a sample involves computation of many pairwise similarities, which is computationally prohibitive in the context of massive data sets.

In this work, we provide a formalization of such \emph{stochastic submodular maximization tasks}. 
More precisely, we consider set functions $f: 2^V \to \mathbb{R}_+$, defined as $f(S) = \mathbb{E}_{\gamma \sim \Gamma}[f_\gamma (S)]$ for $S \subseteq V$, where $\Gamma$ is an arbitrary distribution and for each realization $\gamma \sim \Gamma$, the set function $f_\gamma: 2^V \to \mathbb{R}_+$ is monotone and submodular (hence $f$ is monotone submodular). The goal is to maximize $f$ subject to some constraints (e.g. the $k$-cardinality constraint) having access only to i.i.d. samples $f_{\gamma\sim \Gamma}(\cdot)$. 

Methods for submodular maximization fall into two major categories: (i) The classic approach is to directly optimize the objective using discrete optimization methods (e.g. the \textsc{Greedy} algorithm and its accelerated variants), which are state-of-the-art algorithms (both in practice and theory), at least in the case of simple constraints, and are most widely considered in the literature; (ii) The alternative is to lift the problem into a continuous domain and exploit continuous optimization techniques available therein \citep{vondrak-pipage}. While the continuous approaches may lead to provably good results, even for more complex constraints, their high computational complexity inhibits their practicality.

In this paper we demonstrate how modern stochastic optimization techniques (such as \textsc{SGD}, \textsc{AdaGrad} \cite{adagrad} and \textsc{Adam} \cite{adam}), can be used to solve an important class of discrete optimization problems which can be modeled using weighted coverage functions. In particular, we show how to efficiently maximize them under matroid constraints by (i) lifting the problem into the continuous domain using the \emph{multilinear extension} \cite{vondrak2008optimal}, (ii) efficiently computing a concave relaxation of the multilinear extension \cite{Yaron-cover}, (iii) efficiently computing an unbiased estimate of the gradient for the concave relaxation thus enabling (projected) stochastic gradient ascent-style algorithms to maximize the concave relaxation, and (iv) rounding the resulting fractional solution without loss of approximation quality \cite{vondrak-pipage}. In addition to providing convergence and approximation guarantees, we demonstrate that our algorithms enjoy strong empirical performance, often achieving an order of magnitude speedup with less than $1\%$ error with respect to \textsc{Greedy}. As a result, the presented approach unleashes the powerful toolkit of stochastic gradient based approaches to discrete optimization problems. 

\paragraph{Our contributions.} In this paper we (i) introduce a framework for \emph{stochastic submodular optimization},
(ii) provide a general methodology for constrained maximization of stochastic submodular objectives, (iii) prove that the proposed approach guarantees a $(1-1/e)-$approximation in expectation for the class of weighted coverage functions, which is the best approximation guarantee achievable  in polynomial time unless $\mathrm{P=NP}$, (iv) highlight the practical benefit and efficiency of using continuous-based stochastic optimization techniques for submodular maximization, (v) demonstrate the practical utility of the proposed framework in an extensive experimental evaluation. We show for the first time that continuous optimization is a highly practical, scalable avenue for maximizing submodular set functions.

\section{Background and problem formulation}\label{sec:background}
Let $V$ be a ground set of $n$ elements. A set function $f:2^V\longrightarrow\RR_+$ is \emph{submodular} if for every $A,B\subseteq V$, it holds $f(A)+f(B)\geq f(A\cap B) + f(A\cup B)$. Function $f$ is said to be monotone if $f(A)\leq f(B)$ for all $A\subseteq B\subseteq V$. We focus on maximizing $f$ subject to some constraints on $S \subseteq V $. The prototypical example is maximization under the cardinality constraint, i.e., for a given integer $k$, find $S \subseteq V$, $|S|\leq k$, which maximizes $f$. Finding an exact solution for monotone submodular functions is NP-hard \cite{fiege-98}, but a $(1-1/e)$-approximation can be efficiently determined~\cite{greedyapprox}. Going beyond the $(1 -1/e)$-approximation is NP-hard for many classes of submodular functions~\cite{greedyapprox,krause2005near}. More generally, one may consider \emph{matroid constraints}, whereby $(V, \mathcal{I})$ is a matroid with the family of independent sets $\mathcal{I}$, and maximize $f$ such that $S \in \mathcal{I}$. The \textsc{Greedy} algorithm achieves a $1/2$-approximation \cite{fisher1978analysis}, but \textsc{Continuous Greedy} introduced by \citet{vondrak2008optimal, calinescu2007maximizing} can achieve a $(1-1/e)$-optimal solution in expectation. Their approach is based on the \textit{multilinear extension} of $f$, $F: [0,1]^V \to \mathbb{R}_+$, defined as 
\begin{equation} \label{multilinear} 
	F(\xb) = \sum_{S \subseteq V } f(S) \prod_{i \in S} x_i \prod_{j \notin S} (1-x_j), 
\end{equation} 
for all $\xb = (x_1,\cdots, x_n) \in [0,1]^V$. In other words, $F(\xb)$ is the expected value of of $f$ over sets wherein each element $i$ is included with probability $x_i$ independently. Then, instead of optimizing $f(S)$ over $\mathcal{I}$, we can optimize $F$ over the matroid base polytope corresponding to $(V, \Ical)$: $\Pcal = \{ \xb\in \RR^n_+\;|\; \xb(S)\leq r(S), \forall S\subseteq V, \xb(V) = r(V)\}$, where $r(\cdot)$ is the matroid's rank function. The \textsc{Continuous Greedy} algorithm then finds a solution $\xb \in \Pcal$ which provides a $(1-1/e)-$approximation. Finally, the continuous solution $\xb$ is then efficiently rounded to a feasible discrete solution without loss in objective value, using \textsc{Pipage Rounding} \cite{ageev2004pipage, calinescu2007maximizing}. The idea of converting a discrete optimization problem into a continuous one was first exploited by \citet{lovasz1983submodular} in the context of submodular minimization and this approach was recently applied to a variety of problems \cite{vondrak2007submodularity, rishabhbilmes_semidifferentials_arxiv2015, DBLP:journals/corr/abs-1010-4207}.

\paragraph{Problem formulation.} The aforementioned results are based on the \textit{oracle model}, whereby the exact value of $f(S)$ for any $S\subseteq V$ is given by an oracle. In absence of such an oracle, we face the additional challenges of {\em evaluating} $f$, both statistical and computational. In particular, consider set functions that are defined as \emph{expectations}, i.e. for $S \subseteq V$ we have
\begin{equation} \label{f-exp}
f(S) = \mathbb{E}_{\gamma \sim \Gamma}[f_\gamma (S)],
\end{equation}
where $\Gamma$ is an arbitrary distribution and for each realization $\gamma \sim \Gamma$, the set function $f_\gamma: 2^V \to \mathbb{R}$ is submodular. 
The goal is to efficiently maximize $f$ subject to constraints such as the $k$-cardinality constraint, or more generally, a matroid constraint.

As a motivating example, consider the problem of propagation of contagions through a network. The objective is to identify the most influential seed set of a given size. A propagation instance (concrete realization of a contagion) is specified by a graph $G = (V, E)$. The influence $f_G(S)$ of a set of nodes $S$ in instance $G$ is the fraction of nodes reachable from $S$ using the edges $E$. 
To handle uncertainties in the concrete realization, it is natural to introduce a probabilistic model such as the Independent Cascade \cite{kempe} model which defines a distribution $\Gcal$ over instances $G \sim \Gcal$ that share a set $V$ of nodes. The influence of a seed set $S$ is then the expectation $f(S) = \EE_{G \sim \Gcal}[f_G(S)]$, which is a monotone submodular function. Hence, estimating the expected influence is computationally demanding, as it requires summing over exponentially many functions $f_G$. Assuming $f$ as in \eqref{f-exp}, one can easily obtain an unbiased estimate of $f$ for a fixed set $S$ by random sampling according to $\Gamma$. 
The critical question is, given that the underlying function \emph{is} an expectation, can we optimize it more efficiently?

Our approach is based on continuous extensions that are linear operators on the class of set functions, namely, \textit{linear continuous extensions}. As a specific example, considering the multilinear extension, we can write $F(\xb) = \mathbb{E}_{\gamma \sim \Gamma}[F_\gamma (\xb)],$ where $F_\gamma$ denotes the extension of $f_\gamma$. As a consequence, the value of $F_\gamma(\xb)$, when $\gamma \sim \Gamma$, is an \textit{unbiased estimator} for $F(\xb)$ and unbiased estimates of the (sub)gradients may be obtained analogously. 
We explore this avenue to develop efficient algorithms for maximizing an important subclass of submodular functions that can be expressed as weighted coverage functions. Our approach harnesses a \textit{concave relaxation} detailed in Section~\ref{sec:WCF}.  

\textbf{Further related work.} The emergence of new applications, combined with a massive increase in the amount of data has created a demand for fast algorithms for submodular
optimization. A variety of approximation algorithms have been presented, ranging from submodular maximization subject to a cardinality constraint
\cite{lazier,wei2014-fast-multi-stage-icml, badanidiyuru2014fast}, submodular maximization subject to a matroid constraint \cite{calinescu2007maximizing}, non-monotone submodular maximization \cite{feige2011maximizing}, approximately submodular functions \cite{yaron-noisy}, and algorithms for submodular maximization subject to a wide variety of constraints \cite{kulik2009maximizing, feldman2011unified, vondrak2013symmetry, iyer2013submodular,  alina-stochastic}. A closely related setting to ours is online submodular maximization \cite{daniel-online}, where functions come one at a time and the goal is to provide time-dependent solutions (sets) such that a cumulative regret is minimized. In contrast, our goal is to find a single (time-independent) set that maximizes the objective \eqref{f-exp}. Another relevant setting is noisy submodular maximization,  where the evaluations returned by the oracle are noisy \cite{DBLP:journals/corr/HassidimS16, singla16noisy}. Specifically, \cite{singla16noisy} assumes a noisy but unbiased oracle (with an independent sub-Gaussian noise) which allows one to sufficiently estimate the marginal gains of items by averaging. In the context of cardinality constraints, some of these ideas can be carried to our setting by introducing additional assumptions on how the values $f_\gamma(S)$ vary w.r.t. to their expectation $f(S)$. However, we provide a different approach that does not rely on uniform convergence and compare sample and running time complexity comparison with variants of  \textsc{Greedy} in Section~\ref{sec:concave}.

\section{Stochastic Submodular Optimization}\label{section3}
We follow the general framework of~\cite{vondrak2008optimal} whereby the problem is lifted into the continuous domain, a continuous optimization algorithm is designed to maximize the transferred objective, and the resulting solution is rounded. Maximizing $f$ subject to a matroid constraint can then be done by first maximizing its multilinear extension  $F$ over the matroid base polytope and then rounding the solution. Methods such as the projected stochastic gradient ascent can be used to maximize $F$ over this polytope.

Critically, we have to assure that the computed local optima are \emph{good} in expectation. Unfortunately, the multilinear extension $F$ lacks concavity and therefore may have bad local optima. Hence, we consider \emph{concave} continuous extensions of $F$ that are \textit{efficiently computable}, and at most a constant factor away from $F$ to ensure solution quality. As a result, such a concave extension $\bar{F}$ could then be efficiently maximized over a polytope using \textit{projected stochastic gradient ascent} which would enable the application of modern continuous optimization techniques.
One class of important functions for which such an extension can be efficiently computed is the class of weighted coverage functions. 

\paragraph{The class of weighted coverage functions (WCF).}  \label{sec:WCF}
Let $U$ be a set and let $g$ be a nonnegative modular function on $U$, i.e. $g(S) = \sum_{u\in S} w(u)$, $S \subseteq U$. Let $V=\{B_1,\ldots, B_n\}$ be a collection of subsets of $U$. The \textit{weighted coverage function} $f:2^V \longrightarrow \RR^+$ defined as
\begin{align*}
 \forall S\subseteq V:\, f(S) = g\left(\textstyle\bigcup_{B_i\in S} B_i\right)
\end{align*}
is monotone submodular. For all $u\in U$, let us denote by $P_u:= \{B_i\in V\;|\; u\in B_i\}$ and by $\II (\cdot)$ the indicator function. The multilinear extension of $f$ can be expressed in a more compact way:
\begin{align}
F(\xb)&=\EE_S[f(S)]
	= \EE_S \sum_{u\in U} \II(u\in B_i\; \text{for some}\; B_i\in S)\cdot w(u)\nonumber\\
	&= \sum_{u\in U} w(u)\cdot\PP(u\in B_i\; \text{for some}\; B_i\in S)
	= \sum_{u\in U} w(u)\biggl(1 - \textstyle\prod_{B_i\in P_u}(1-x_i)\biggr) \label{compact}
\end{align}
where we used the fact that each element $B_i\in V$ was chosen with probability $x_i$.

\paragraph{Concave upper bound for weighted coverage functions.}
To efficiently compute a concave upper bound on the multilinear extension we use the framework of \citet{Yaron-cover}.
Given that all the weights $w(u)$, $u \in U$ in~\eqref{compact} are non-negative, we can construct a concave upper bound for the multilinear extension $F(\xb)$ using the following Lemma. Proofs can be found in the Appendix~\ref{appx:proof-lem-1}. 
\begin{lemma}\label{lem:upper}
For $\xb \in [0,1]^\ell$ define $\alpha(\xb) := 1 -  \prod_{i=1}^\ell (1 - x_i)$. Then the Fenchel concave biconjugate of $\alpha(\cdot)$ is $\beta(\xb) := \min\left\{1,\sum_{i=1}^\ell x_i \right\}$. Also
\[
  (1 - 1/e)\ \beta(\xb) \leq \alpha(\xb) \leq \beta(\xb)\quad \forall \xb \in [0,1]^\ell.
\]
Furthermore, $\beta$ is an extension of $\alpha$, i.e. $\forall \xb\in \{0,1\}^\ell$:  $\alpha(\xb)=\beta(\xb)$.
\end{lemma}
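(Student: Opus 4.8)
The plan is to handle the three assertions separately, in increasing order of difficulty, and to recycle the easy pieces when proving the hard one. Throughout I write $s:=\sum_{i=1}^{\ell}x_i$, and observe that $\alpha$ is precisely the multilinear extension~\eqref{multilinear} of the set function $f(S)=\II(S\neq\emptyset)$ on the ground set $[\ell]$, so $\alpha(\one_S)=\II(S\neq\emptyset)$ for every $S\subseteq[\ell]$.

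First I dispatch the extension property and the two-sided bound, which are routine. If $\xb\in\{0,1\}^{\ell}$ then $\prod_i(1-x_i)$ equals $1$ when $\xb=\zero$ and $0$ otherwise, while $\min\{1,s\}$ equals $0$ when $\xb=\zero$ and $1$ otherwise, so $\alpha(\xb)=\beta(\xb)$. For $\alpha\le\beta$: since $\prod_i(1-x_i)\ge 0$ we get $\alpha\le 1$, and a one-line union bound (or a trivial induction on $\ell$) gives $\alpha(\xb)=\PP[\text{some coordinate ``on''}]\le\sum_i x_i=s$; hence $\alpha\le\min\{1,s\}=\beta$. For $(1-1/e)\beta\le\alpha$: using $1-x_i\le e^{-x_i}$ gives $\alpha(\xb)\ge 1-e^{-s}$; when $s\ge 1$ this is $\ge 1-1/e=(1-1/e)\beta(\xb)$, and when $s\le 1$ concavity of $t\mapsto 1-e^{-t}$ places it above its chord from $(0,0)$ to $(1,1-1/e)$, i.e. $1-e^{-s}\ge(1-1/e)\,s=(1-1/e)\beta(\xb)$.

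The substantive claim is $\beta=\alpha^{**}$, where $\alpha$ is viewed as a function on the cube $[0,1]^{\ell}$ (extended by $-\infty$ outside). I will use the standard characterization of the Fenchel concave biconjugate as the pointwise smallest upper-semicontinuous concave function dominating $\alpha$ (its concave closure). One direction is immediate: $\beta=\min\{1,s\}$ is a minimum of affine functions, hence u.s.c.\ and concave, and we just showed $\beta\ge\alpha$, so $\alpha^{**}\le\beta$. For $\alpha^{**}\ge\beta$, I use that $\alpha^{**}\ge\alpha$ forces $\alpha^{**}(\one_S)\ge 1$ for all nonempty $S$ and $\alpha^{**}(\zero)\ge 0$, and then apply concavity of $\alpha^{**}$ along two convex combinations. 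When $s\le 1$, write $\xb=(1-s)\,\zero+\sum_i x_i\,\one_{\{i\}}$ to obtain $\alpha^{**}(\xb)\ge\sum_i x_i=\beta(\xb)$. When $s\ge 1$, use that $\xb$ lies in the convex hull of $\{\one_S:\emptyset\neq S\subseteq[\ell]\}$ — the standard polytope identity $\mathrm{conv}\big(\{0,1\}^{\ell}\setminus\{\zero\}\big)=\{\zb\in[0,1]^{\ell}:\sum_i z_i\ge 1\}$ — to write $\xb=\sum_{S\neq\emptyset}\lambda_S\,\one_S$ with $\lambda$ a distribution, giving $\alpha^{**}(\xb)\ge\sum_{S}\lambda_S=1=\beta(\xb)$. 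Since the two cases cover $[0,1]^{\ell}$, this yields $\alpha^{**}=\beta$.

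The only step that is not purely mechanical is this last one, and inside it the regime $s\ge 1$: one needs that every point of $[0,1]^{\ell}$ with coordinate sum at least one is a convex combination of nonzero $0/1$ vectors, which is standard (and provable by a short induction on $\ell$). An alternative route to the biconjugate claim is to compute $\alpha^{*}$ directly — after the substitution $z_i=1-x_i$ one minimizes the multilinear map $\prod_i z_i-\sum_i y_i z_i$ over $[0,1]^{\ell}$, hence at a vertex, which after simplification gives $\alpha^{*}(\yb)=\min\{\min_i y_i,\,1\}-1$ on $\yb\ge\zero$ — and then to evaluate $\alpha^{**}$; this works but is fiddlier to carry out cleanly over all of $\RR^{\ell}$, so I would favour the concave-closure argument above.
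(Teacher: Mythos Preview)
Your argument is correct. The extension property and the two-sided inequality match the paper's treatment essentially verbatim (the paper also uses $1-x\le e^{-x}$ and then the monotonicity of $(1-e^{-\theta})/\theta$, which is your chord argument in different clothing).

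Where you genuinely diverge is in establishing $\alpha^{**}=\beta$. The paper computes the concave conjugate $\alpha^{*}$ by brute force---splitting into interior points (solving $\nabla_{\xb}=0$), boundary points with some $x_i=1$, and boundary points with some $x_i=0$, proving two auxiliary lemmas along the way---and then repeats a similar case analysis to evaluate $\alpha^{**}$. You instead invoke the identification of $\alpha^{**}$ with the upper-semicontinuous concave envelope, get $\alpha^{**}\le\beta$ for free from $\alpha\le\beta$, and obtain $\alpha^{**}\ge\beta$ by two convex-combination identities on the cube. Your route is considerably shorter and more conceptual; the polytope fact $\mathrm{conv}\bigl(\{0,1\}^{\ell}\setminus\{\zero\}\bigr)=\{\zb\in[0,1]^{\ell}:\sum_i z_i\ge 1\}$ does the real work, and your justification of it (no cube edge is cut transversally by the hyperplane $\sum z_i=1$, so no new vertices appear) is sound. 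The paper's computation buys an explicit formula for $\alpha^{*}$, which you do not need and correctly set aside; the ``alternative route'' you sketch at the end is in fact exactly what the paper carries out in full.
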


Consequently, given a weighted coverage function $f$ with $F(\xb)$ represented as in \eqref{compact}, we can define
\begin{align}\label{eq:fbar}
\Fbar(\xb) := \sum_{u\in U} w(u)\min\biggl\{1, \sum_{B_v\in P_u} x_v \biggr\}	
\end{align}
and conclude using Lemma~\ref{lem:upper} that $(1-1/e)\Fbar(\xb) \leq F(\xb) \leq \Fbar(\xb)$, as desired.
Furthermore, $\Fbar$ has three interesting properties: (1) It is a concave function over $[0,1]^{V}$, (2) it is equal to $f$ on vertices of the hypercube, i.e. for $\xb \in \{0,1\}^n$ one has $\Fbar(\xb) = f(\{i:x_i = 1\})$, and 
(3) it can be computed efficiently and deterministically given access to the sets $P_u$, $u \in U$. 
In other words, we can compute the value of $\Fbar(\xb)$ using at most $\mathcal{O}(|U|\times|V|)$ operations. Note that $\Fbar$ is not the \textit{tightest} concave upper bound of $F$, even though we use the tightest concave upper bounds for each term of $F$.

\paragraph{Optimizing the concave upper bound by stochastic gradient ascent.}\label{sec:concave}
Instead of maximizing $F$ over a polytope $\Pcal$, one can now attempt to maximize $\Fbar$ over $\Pcal$. Critically, this task can be done efficiently, as $\Fbar$ is concave, by using projected stochastic gradient ascent. In particular, one can control the convergence speed by choosing from the toolbox of modern continuous optimization algorithms, such as \textsc{Sgd}, \textsc{AdaGrad} and \textsc{Adam}.
Let us denote a maximizer of $\Fbar$ over $\Pcal$ by $\bar{\xb}^*$, and also a maximizer of $F$ over $\Pcal$ by $\xb^*$.  We can thus write
\begin{align*}
 F(\bar{\xb}^*) \geq (1-1/e)\Fbar(\bar{\xb}^*) \geq (1-1/e)\Fbar(\xb^*) \geq (1-1/e)F(\xb^*),
\end{align*}
which is the exact guarantee that previous methods give, and in general is the best near-optimality ratio that one can give in poly-time.
Finally, to round the continuous solution we may apply \textsc{Randomized-Pipage-Rounding} \cite{vondrak-pipage} as the quality of the approximation is preserved in expectation.
\begin{algorithm}[t]
   \caption{Stochastic Submodular Maximization via concave relaxation}
   \label{alg:scheme}
\begin{algorithmic}[1]
   \REQUIRE{matroid $\Mcal$ with base polytope $\Pcal$, $\eta_t$ (step size), $T$ (maximum \# of iterations)}
   \STATE $\xb^{(0)} \gets$  starting point in $\Pcal$
   \FOR{$t\gets0$ \textbf{to} $T-1$}
   \STATE Choose $\gb_t$ at random from a distribution such that $\EE[\gb_t|\xb^{(0)},\ldots,\xb^{(t)}] \in \partial \Fbar(\xb^{(t)})$
   \STATE $\xb^{(t+1/2)} \gets \xb^{(t)} + \eta_t\ \gb_t$
   \STATE $\xb^{(t+1)} \gets \mathrm{Project}_\Pcal(\xb^{(t+1/2)})$
   \ENDFOR
   \STATE $\bar{\xb}_T \gets \frac{1}{T}\sum_{t=1}^{T} \xb^{(t)}$
   \STATE $S \gets $ \textsc{Randomized-Pipage-Round}$(\bar{\xb}_T)$
   \STATE \textbf{return} $S$ such that $S \in \Mcal$, $\EE[f(S)]\geq(1-1/e)f(OPT) - \varepsilon(T)$.
\end{algorithmic}
\end{algorithm}

\paragraph{Matroid constraints.} Constrained optimization can be efficiently performed by projected gradient ascent whereby after each step of the stochastic ascent, we need to project the solution back onto the feasible set. For the case of matroid constraints, it is sufficient to consider projection onto the matroid base polytope. This problem of projecting on the base polytope has been widely studied and fast algorithms exist in many cases \cite{bach-tutorial, brucker1984n, pardalos1990algorithm}. While these projection algorithms were used as a key subprocedure in constrained submodular minimization, here we consider them for submodular maximization. Details of a fast projection algorithm for the problems considered in this work are presented the Appendix~\ref{appx:proj}.
Algorithm~\ref{alg:scheme} summarizes all steps required to maximize $f$ subject to matroid constraints.

\paragraph{Convergence rate.}
Since we are maximizing a concave function $\Fbar(\cdot)$ over a matroid base polytope $\Pcal$, convergence rate (and hence running time) depends on $B := \max_{\xb \in \Pcal} ||\xb||$, as well as maximum gradient norm $\rho$ (i.e.  $ || \gb_t || \leq \rho$ with probability $1$).~\footnote{Note that the function $\Fbar$ is neither smooth nor strongly concave as functions such as $\min\{1,x\}$ are not smooth or strongly concave.} In the case of the base polytope for a matroid of rank $r$, $B$ is $\sqrt{r}$, since each vertex of the polytope has exactly $r$ ones. 
Also, from \eqref{eq:fbar}, one can build a rough upper bound for the norm of the gradient:
\[
	||\gb|| \leq ||\textstyle\sum_{u\in U} w(u)\one_{P_u}|| \leq \bigl(\displaystyle{\max_{u\in U}}|P_u| \bigr)^{1/2}  \displaystyle{\sum_{u\in U}}w(u),
\]
which depends on the weights $w(u)$ as well as $|P_u|$ and is hence problem-dependent. We will provide tighter upper bounds for gradient norm in our specific examples in the later sections. With $\eta_t = B/\rho\sqrt{t}$, and classic results for \textsc{SGD} \cite{shalev}, we have that
\[
  \Fbar(\xb^*) - \EE[\Fbar(\bar{\xb}_T)] \leq B \rho/\sqrt{T},
\]
where $T$ is the total number of SGD iterations and $\bar{\xb}_T$ is the final outcome of SGD (see Algorithm~\ref{alg:scheme}). Therefore, for a given $\varepsilon>0$, after $T\geq B^2\rho^2/\varepsilon^2$ iterations,  we have 
\[
  \Fbar(\xb^*) - \EE[\Fbar(\bar{\xb}_T)] \leq \varepsilon. 
\]
Summing up, we will have the following theorem:

\begin{theorem}\label{main2}
	Let $f$ be a weighted coverage function, $\Pcal$ be the base polytope of a matroid $\Mcal$, and $\rho$ and $B$ be as above. Then for each $\epsilon>0$, Algorithm~\ref{alg:scheme} after $T = B^2\rho^2/\varepsilon^2$ iterations, produces a set $S^*\in\Mcal$ such that $\EE [f(S^*)] \geq (1-1/e)\max_{S \in \Mcal}f(S) - \varepsilon$. 
\end{theorem}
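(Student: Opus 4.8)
The plan is to assemble the claim from four ingredients already established: the concave sandwich of Lemma~\ref{lem:upper}, the projected stochastic subgradient ascent convergence bound, the relation between the multilinear extension on the matroid base polytope and the discrete optimum, and the value-preserving guarantee of randomized pipage rounding. First I would fix notation. Let $\Fbar$ be the concave upper bound from \eqref{eq:fbar}, let $\bar{\xb}^{*}\in\argmax_{\xb\in\Pcal}\Fbar(\xb)$, $\xb^{*}\in\argmax_{\xb\in\Pcal}F(\xb)$, and $S_{\mathrm{OPT}}\in\argmax_{S\in\Mcal}f(S)$. Since $f$ is a weighted coverage function, applying Lemma~\ref{lem:upper} term by term to the representation \eqref{compact} gives $(1-1/e)\Fbar(\xb)\le F(\xb)\le\Fbar(\xb)$ for every $\xb\in[0,1]^{V}$, and $\Fbar$ is concave on the compact convex set $\Pcal$.

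Next I would invoke the convergence analysis. The base polytope $\Pcal$ is convex and compact with $\max_{\xb\in\Pcal}\|\xb\|\le B$, and by construction of line~3 of Algorithm~\ref{alg:scheme}, $\gb_{t}$ is an unbiased estimate of a subgradient of $\Fbar$ at $\xb^{(t)}$ with $\|\gb_{t}\|\le\rho$ almost surely. Even though $\Fbar$ is neither smooth nor strongly concave, the standard analysis of projected stochastic subgradient ascent with $\eta_{t}=B/(\rho\sqrt{t})$ applies \cite{shalev} and gives $\Fbar(\bar{\xb}^{*})-\EE[\Fbar(\bar{\xb}_{T})]\le B\rho/\sqrt{T}$; hence after $T=B^{2}\rho^{2}/\varepsilon^{2}$ iterations, $\Fbar(\bar{\xb}^{*})-\EE[\Fbar(\bar{\xb}_{T})]\le\varepsilon$.

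Then I would chain the inequalities. Taking expectations over the randomness of the ascent and using the pointwise lower sandwich,
\[
\EE[F(\bar{\xb}_{T})]\ \ge\ (1-1/e)\,\EE[\Fbar(\bar{\xb}_{T})]\ \ge\ (1-1/e)\bigl(\Fbar(\bar{\xb}^{*})-\varepsilon\bigr)\ \ge\ (1-1/e)\Fbar(\bar{\xb}^{*})-\varepsilon.
\]
By optimality of $\bar{\xb}^{*}$ for $\Fbar$ over $\Pcal$ and the upper sandwich, $\Fbar(\bar{\xb}^{*})\ge\Fbar(\xb^{*})\ge F(\xb^{*})$. Since $\Pcal$ is the matroid base polytope, its vertices are the indicator vectors of the bases of $\Mcal$ and $F$ agrees with $f$ there, so $F(\xb^{*})=\max_{\xb\in\Pcal}F(\xb)$ is at least the maximum of $f$ over the bases of $\Mcal$, which equals $\max_{S\in\Mcal}f(S)$ by monotonicity of $f$. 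Combining, $\EE[F(\bar{\xb}_{T})]\ge(1-1/e)f(S_{\mathrm{OPT}})-\varepsilon$.

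Finally I would apply the rounding. Feeding $\bar{\xb}_{T}\in\Pcal$ to \textsc{Randomized-Pipage-Round} returns $S^{*}\in\Mcal$ with $\EE[f(S^{*})\mid\bar{\xb}_{T}]\ge F(\bar{\xb}_{T})$ \cite{vondrak-pipage}; taking total expectation, $\EE[f(S^{*})]\ge\EE[F(\bar{\xb}_{T})]\ge(1-1/e)f(S_{\mathrm{OPT}})-\varepsilon$, which is the statement. The one point needing care, rather than a genuine obstacle, is the second step: one must confirm that the cited $\mathcal{O}(B\rho/\sqrt{T})$ rate still holds for the non-smooth concave $\Fbar$ --- it does, since that argument only uses a bounded-norm stochastic subgradient and a bounded domain --- and that line~3 of Algorithm~\ref{alg:scheme} genuinely produces an unbiased subgradient estimate, which follows from the linearity of the extension and the explicit form of $\Fbar$ in \eqref{eq:fbar}. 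Everything else is bookkeeping over the two independent sources of randomness, the ascent and the rounding.
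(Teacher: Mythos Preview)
Your proposal is correct and follows essentially the same route as the paper: the argument preceding the theorem statement combines the sandwich $(1-1/e)\Fbar\le F\le\Fbar$, the $B\rho/\sqrt{T}$ projected stochastic subgradient rate for $\Fbar$, and \textsc{Randomized-Pipage-Rounding}, exactly as you do. If anything, you are more explicit than the paper about two points it leaves implicit: the inequality $F(\xb^{*})\ge\max_{S\in\Mcal}f(S)$ (via the vertices of $\Pcal$ and monotonicity of $f$) and the tower-of-expectations step that combines the randomness of the ascent with that of the rounding.
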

\paragraph{Remark.}
	Indeed this approximation ratio is the best ratio one can achieve, unless P$=$NP \cite{fiege-98}. A key point to make here is that our approach also works for more general constraints (in particular is efficient for \emph{simple} matroids such as partition matroids). In the latter case, \textsc{Greedy} only gives $\frac{1}{2}$-approximation and fast discrete methods like \textsc{Stochastic-Greedy} \cite{lazier} do not apply, whereas our method still yields an $(1-1/e)$-optimal solution.

\paragraph{Time Complexity.} One can compute an upper bound for the running time of Algorithm~\ref{alg:scheme} by estimating the time required to perform gradient computations, projection on $\Pcal$, and rounding. For the case of uniform matroids, projection and rounding take $\mathcal{O}(n\log n)$ and $\mathcal{O}(n)$ time, respectively (see Appendix~\ref{appx:proj}). Furthermore, for the applications considered in this work, namely expected influence maximization and exemplar-based clustering, we provide linear time algorithms to compute the gradients. Also when our matroid is the $k$-uniform matroid (i.e. $k$-cardinality constraint), we have $B = \sqrt{k}$. By Theorem~\ref{main2}, the total computational complexity of our algorithm is 
$\mathcal{O}(\rho^2 kn(\log n)/\varepsilon^2)$.

\paragraph{Comparison to \textsc{Greedy}.} Let us relate our results to the classical approach. When running the \textsc{Greedy} algorithm in the stochastic setting, one estimates $\hat{f}(S) := \frac{1}{s}\sum_{i=1}^s f_{\gamma_i}(S)$ where $\gamma_1, \ldots, \gamma_s$ are i.i.d. samples from $\Gamma$. The following proposition bounds the sample and computational complexity of \textsc{Greedy}. The proof is detailed in the Appendix~\ref{appx:hoff}.
\begin{proposition}\label{prop:iidsamples}
  Let $f$ be a submodular function defined as~\eqref{f-exp}. Suppose $0\leq f_\gamma(S) \leq H$ for all $S\subseteq V$ and all $\gamma\sim \Gamma$. 
  Assume $S^*$ denotes the optimal solution for $f$ subject to $k$-cardinality constraint and $S_k$ denotes the solution computed by the greedy algorithm on $\fhat$ after $k$ steps. 
  Then, in order to guarantee
  \[
    \PP[f(S_k) \geq (1 - 1/e)f(S^*) - \varepsilon] \geq 1-\delta,
  \]
  it is enough to have 
  \[
    s \in \Omega\biggl(H^2 ( k\log n + \log(1/\delta) ) /\varepsilon^2\biggr),
  \]
  i.i.d. samples from $\Gamma$. 
  The running time of \textsc{Greedy} is then bounded by
  \[
    \mathcal{O}\biggr( \tau H^2 nk  ( k \log n + \log(1/\delta) )/\varepsilon^2 \biggr),
  \]
  where $\tau$ is an upper bound on the computation time for a single evaluation of $f_\gamma(S)$.
\end{proposition}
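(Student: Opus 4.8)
The plan is to decouple a \emph{deterministic} statement about running \textsc{Greedy} on the empirical objective $\fhat$ from a \emph{stochastic} statement about how well $\fhat$ approximates $f$, and then combine the two with a short chain of inequalities. The deterministic part is immediate: $\fhat=\tfrac1s\sum_{i=1}^s f_{\gamma_i}$ is a nonnegative combination of monotone submodular functions and is therefore itself monotone submodular, so the classical analysis of the greedy algorithm~\cite{greedyapprox} applies to $\fhat$ unchanged. Hence, for \emph{every} realization of $\gamma_1,\dots,\gamma_s$, the output $S_k$ of \textsc{Greedy} (run on $\fhat$) satisfies $\fhat(S_k)\ge(1-1/e)\max_{|S|\le k}\fhat(S)\ge(1-1/e)\,\fhat(S^*)$, using $|S^*|\le k$. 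No randomness has been touched yet.

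For the stochastic part I would invoke Hoeffding's inequality: for a \emph{fixed} set $S$, $\fhat(S)$ is an average of $s$ i.i.d.\ variables in $[0,H]$ with mean $f(S)$, so $\PP[\,|\fhat(S)-f(S)|>\xi\,]\le 2\exp(-2s\xi^2/H^2)$. Although $S_k$ is random and correlated with $\fhat$, it always belongs to the fixed family $\mathcal{F}=\{S\subseteq V:|S|\le k\}$, and so does $S^*$. A union bound over $\mathcal{F}$ (whose cardinality is at most $(n+1)^k$, so $\log|\mathcal{F}|=O(k\log n)$) shows that the event $E:=\{\forall S\in\mathcal{F}:|\fhat(S)-f(S)|\le\xi\}$ has probability at least $1-2(n+1)^k\exp(-2s\xi^2/H^2)$; requiring this to exceed $1-\delta$ gives $s=\Omega\!\big(\tfrac{H^2}{\xi^2}(k\log n+\log(1/\delta))\big)$.

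On the event $E$ the pieces glue together: since $S_k,S^*\in\mathcal{F}$,
\[
 f(S_k)\ \ge\ \fhat(S_k)-\xi\ \ge\ (1-1/e)\,\fhat(S^*)-\xi\ \ge\ (1-1/e)\big(f(S^*)-\xi\big)-\xi\ \ge\ (1-1/e)f(S^*)-2\xi .
\]
Taking $\xi=\varepsilon/2$ makes the right-hand side $(1-1/e)f(S^*)-\varepsilon$, so the desired bound holds with probability at least $\PP[E]\ge 1-\delta$, and substituting $\xi=\varepsilon/2$ into the sample bound yields the stated $s\in\Omega(H^2(k\log n+\log(1/\delta))/\varepsilon^2)$. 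The running time then follows by bookkeeping: \textsc{Greedy} performs $k$ rounds, each probing the marginal gain of at most $n$ elements, i.e.\ $\mathcal{O}(nk)$ evaluations of $\fhat$, and each evaluation of $\fhat$ aggregates $s$ calls to some $f_\gamma$ at cost at most $\tau$ apiece, giving $\mathcal{O}(\tau n k s)=\mathcal{O}(\tau H^2 n k(k\log n+\log(1/\delta))/\varepsilon^2)$.

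The one place that needs care is the union bound in the second step: because $S_k$ is a function of the very samples defining $\fhat$, one cannot union-bound only over the (data-dependent) greedy trajectory, and it is precisely union-bounding over all of $\mathcal{F}$ that introduces the $\log|\mathcal{F}|=\Theta(k\log n)$ term matching the statement. It is also worth noting that one should resist analyzing \textsc{Greedy} on $\fhat$ as an \emph{additive-error} greedy for $f$: that route would require per-set accuracy $\Theta(\varepsilon/k)$ and lose an extra factor $k^2$ in $s$, whereas exploiting that $\fhat$ is genuinely submodular needs only accuracy $\Theta(\varepsilon)$ — the rest is routine.
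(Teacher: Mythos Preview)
Your proof is correct and follows essentially the same approach as the paper: Hoeffding's inequality plus a union bound over all subsets of size at most $k$ (giving the $k\log n$ term), combined with the observation that $\fhat$ is itself monotone submodular so \textsc{Greedy} run on it enjoys the standard $(1-1/e)$ guarantee. Your write-up is in fact more careful than the paper's, which merely sketches the same ingredients; in particular your explicit use of $\xi=\varepsilon/2$ and the running-time bookkeeping fill in details the paper leaves implicit.
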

As an example, let us compare the worst-case complexity bound obtained for  SGD (i.e. $\mathcal{O}(\rho^2 kn(\log n)/\varepsilon^2)$)  with that of \textsc{Greedy}  for the influence maximization problem.  Each single function evaluation for \textsc{Greedy} amounts to computing the total influence of a set in a sample graph, which makes $\tau = O(n)$ (here we assume our sample graphs satisfy $|E|=O(|V|)$). Also, a crude upper bound for the size of the gradient for each sample function is $H\sqrt{n}$ (see Appendix~\ref{sec:det-inf}). Hence, we can deduce that SGD can have a factor $k$ speedup w.r.t. to \textsc{Greedy}. 

\section{Applications}
We will now show how to instantiate the \emph{stochastic submodular maximization framework} using several prototypical discrete optimization problems.
\vspace{-.3cm}
\paragraph{Influence maximization.}
As discussed in Section~\ref{sec:background}, the Independent Cascade \cite{kempe} model defines a distribution $\Gcal$ over instances $G \sim \Gcal$ that share a set $V$ of nodes. The influence $f_G(S)$ of a set of nodes $S$ in instance $G$ is the fraction of nodes reachable from $S$ using the edges $E(G)$. The following Lemma shows that the influence belongs to the class of WCF.
\begin{lemma}\label{lem:infl-wcf}
	The influence function $f_G(\cdot)$ is a WCF. Moreover,
\begin{align}
&F_G(\xb) = \EE_S[f_G(S)] = \frac{1}{|V|}\sum_{v\in V} (1 - \textstyle\prod_{u\in P_v}(1-x_u))\\
&\Fbar_G(\xb) = \frac{1}{|V|}\sum_{v\in V}\min\{1, \textstyle\sum_{u\in P_v}x_u\},
\end{align}
where $P_v$ is the set of all nodes having a (directed) path to $v$. 
\end{lemma}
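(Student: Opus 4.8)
The plan is to verify directly that $f_G$ fits the definition of a weighted coverage function by exhibiting a suitable universe, modular weight function, and collection of covering sets, and then to simply read off the expressions for $F_G$ and $\Fbar_G$ from the general formulas \eqref{compact} and \eqref{eq:fbar}.

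Concretely, I would take the universe to be $U = V$ (the node set of the instance $G$), assign the uniform weight $w(u) = 1/|V|$ to every $u \in U$ --- so that the modular function is $g(T) = |T|/|V|$ --- and, for each node $i \in V$, which plays the role of an element of the ground set over which $f_G$ is defined, let $B_i \subseteq U$ be the set of nodes reachable from $i$ via directed paths in $G$, adopting the convention that $i \in B_i$. The key combinatorial observation is that a node $v$ is reachable from a set $S$ if and only if it is reachable from some single node $i \in S$, since any directed path from $S$ to $v$ emanates from one of the nodes of $S$; hence the set of nodes reachable from $S$ equals $\bigcup_{i \in S} B_i$. Therefore $f_G(S) = |\bigcup_{i\in S}B_i|/|V| = g\bigl(\bigcup_{i\in S}B_i\bigr)$, which is exactly the WCF form (and in particular $f_G(\emptyset)=0$), so $f_G$ is a monotone submodular weighted coverage function.

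It then remains to instantiate $P_u := \{B_i \in V : u \in B_i\}$ for this construction. Since $u \in B_i$ means precisely that $u$ is reachable from $i$, i.e.\ that $i$ has a directed path to $u$, the set $P_u$ coincides with the set $P_v$ of the statement (all nodes having a directed path to $v$). Substituting $w \equiv 1/|V|$ and this description of $P_u$ into \eqref{compact} gives $F_G(\xb) = \frac{1}{|V|}\sum_{v\in V}\bigl(1 - \prod_{u\in P_v}(1-x_u)\bigr)$, and substituting into \eqref{eq:fbar} gives $\Fbar_G(\xb) = \frac{1}{|V|}\sum_{v\in V}\min\{1,\sum_{u\in P_v}x_u\}$, as claimed; the sandwich $(1-1/e)\Fbar_G \le F_G \le \Fbar_G$ is then immediate from Lemma~\ref{lem:upper}.

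This argument is essentially bookkeeping, so I do not expect a genuine obstacle; the only points requiring a little care are the reachability convention --- a node must count as reachable from itself, which is what makes $S \subseteq \bigcup_{i\in S}B_i$ and keeps $f_G$ monotone with $f_G(\emptyset)=0$ --- and the harmless fact that the covering sets $B_1,\dots,B_n$ need not be distinct, which the definition of a weighted coverage function allows.
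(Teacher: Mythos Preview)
Your proposal is correct and follows essentially the same approach as the paper: take the universe $U=V$ with uniform weights $w(u)=1/|V|$, let the covering set associated to each node be its forward-reachable set, and then read off $F_G$ and $\Fbar_G$ from \eqref{compact} and \eqref{eq:fbar} after identifying $P_v$ as the set of nodes with a directed path to $v$. Your treatment is in fact slightly more careful than the paper's, since you make explicit the self-reachability convention and note that the covering sets need not be distinct.
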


We return to the problem of maximizing $f_{\Gcal}(S) = \EE_{G \sim \Gcal}[f_G(S)]$ given a distribution over graphs $\Gcal$ sharing nodes $V$. Since $f_{\Gcal}$ is a weighted sum of submodular functions, it is submodular. Moreover,
\begin{align*}
F(\xb) &= \EE_S[f_\Gcal(S)] = \EE_S[\EE_G[f_G(S)]] = \EE_G[\EE_S[f_G(S)]] = \EE_G[F_G(\xb)]\\
	&= \EE_G\left[ \frac{1}{|V|}\sum_{v\in V} (1 - \textstyle\prod_{u\in P_v}(1-x_u))\right].
\end{align*}
Let $\Ucal$ be the uniform distribution over vertices. Then,
\begin{align}
F(\xb) &= \EE_G\biggl [ \tfrac{1}{|V|}\textstyle\sum_{v\in V} (1 - \textstyle\prod_{u\in P_v}(1-x_u))\biggr]  = \EE_G \biggl[\ \EE_{v\sim \Ucal}\  [ 1 - \textstyle\prod_{u\in P_v}(1-x_u) ] \biggr] \label{eq:multim_inf},
\end{align} 
and the corresponding upper bound would be
\begin{align}\label{eq:ubound_inf}
\Fbar(\xb) = \EE_G \biggl[ \ \EE_{v\sim \Ucal}\bigl[ \min\{1, \textstyle\sum_{u\in P_v}x_u\}\bigr] \biggr].
\end{align}
This formulation proves to be helpful in \textit{efficient} calculation of subgradients, as one can obtain a random subgradient in linear time. For more details see Appendix~\ref{sec:det-inf}. We also provide a more efficient, \textit{biased} estimator of the expectation in the Appendix.
\vspace{-.3cm}
\paragraph{Facility location.} 
Let $G=(X\dot\cup Y, E)$ be a complete weighted bipartite graph with parts $X$ and $Y$ and nonnegative weights $w_{x,y}$. The weights can be considered as utilities or some similarity metric. We select a subset $S\subseteq X$ and each $y\in Y$ selects $s\in S$ with the highest weight $w_{s,y}$. Our goal is to maximize the average weight of these selected edges, i.e. to maximize
\begin{align}\label{eq:facilityloc}
	f(S) = \frac{1}{|Y|}\sum_{y\in Y} \max_{s\in S} w_{s, y}
\end{align}
given some constraints on $S$. This problem is indeed the \textit{Facility Location} problem, if one takes $X$ to be the set of facilities and $Y$ to be the set of customers and $w_{x,y}$ to be the utility of facility $x$ for customer $y$. Another interesting instance is the \textit{Exemplar-based Clustering} problem, in which $X= Y$ is a set of objects and $w_{x,y}$ is the similarity (or inverted distance) between objects $x$ and $y$, and one tries to find a subset $S$ of exemplars (i.e. \textit{centroids}) for these objects.

The stochastic nature of this problem is revealed when one writes \eqref{eq:facilityloc} as the expectation $f(S) = \EE_{y \sim \Gamma} [f_y(S)]$, where $\Gamma$ is the uniform distribution over $Y$ and $f_y(S) := \max_{s\in S} w_{s,y}$. One can also consider this more general case, where $y$'s are drawn from an unknown distribution, and one tries to maximize the aforementioned expectation.

First, we claim that $f_y(\cdot)$ for each $y \in Y$ is again a weighted coverage function. For simplicity, let $X=\{1,\ldots, n\}$ and set $m_i \doteq w_{i, y}$, with $m_1\geq \cdots \geq m_n$ and $m_{n+1} \doteq 0$.
\begin{lemma}\label{lem:facil-wcf}
	The utility function $f_y(\cdot)$ is a WCF. Moreover,
	\begin{align}
  &\textstyle F_y(\xb) = \sum_{i=1}^n (m_i - m_{i+1})(1-\prod_{j=1}^i (1-x_j)),\label{eq:fac-mult}\\
  &\textstyle \Fbar_y(\xb) = \sum_{i=1}^n (m_i - m_{i+1})\min\{1, \sum_{j=1}^i x_j\}. \label{kmeans-upper}
\end{align}
\end{lemma}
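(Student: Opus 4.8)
The plan is to exhibit $f_y$ explicitly as a weighted coverage function with a conveniently chosen ground set, and then simply read off $F_y$ and $\Fbar_y$ from the general WCF expressions \eqref{compact} and \eqref{eq:fbar}.

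First I would set up the coverage representation. Take the ground set $U = \{1,\dots,n\}$, assign element $i$ the weight $w(i) := m_i - m_{i+1}$ (nonnegative since $m_1 \ge \cdots \ge m_n$ and $m_{n+1} = 0$), so that $g(T) = \sum_{i\in T} w(i)$, and for each $i \in X$ define the nested ``suffix'' set $B_i := \{i, i+1, \dots, n\} \subseteq U$. The key observation is that for any $S \subseteq X$, writing $i^\star := \min S$, nestedness gives $\bigcup_{i\in S} B_i = B_{i^\star} = \{i^\star,\dots,n\}$, whence
\[
g\Bigl(\textstyle\bigcup_{i\in S} B_i\Bigr) = \sum_{j=i^\star}^n (m_j - m_{j+1}) = m_{i^\star} - m_{n+1} = m_{i^\star} = \max_{i\in S} m_i = f_y(S),
\]
the telescoping sum collapsing and $m_{i^\star} = \max_{i\in S} m_i$ because the smallest index carries the largest weight under the sorted labeling. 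This establishes that $f_y$ is a WCF (and in particular monotone submodular).

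Next I would identify the sets $P_u$ for this representation: for $u = j$, we have $j \in B_i \iff i \le j$, so $P_j = \{B_1,\dots,B_j\}$. Plugging $P_j$ into \eqref{compact} turns $\prod_{B_i\in P_j}(1-x_i)$ into $\prod_{i=1}^j(1-x_i)$ and yields $F_y(\xb) = \sum_{j=1}^n (m_j - m_{j+1})\bigl(1 - \prod_{i=1}^j(1-x_i)\bigr)$, i.e.\ \eqref{eq:fac-mult}; plugging the same $P_j$ into \eqref{eq:fbar} gives $\Fbar_y(\xb) = \sum_{j=1}^n (m_j - m_{j+1})\min\{1,\sum_{i=1}^j x_i\}$, i.e.\ \eqref{kmeans-upper}. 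As a sanity check one can re-derive \eqref{eq:fac-mult} directly by Abel summation, $\max_{i\in S} m_i = \sum_{j=1}^n (m_j - m_{j+1})\,\II[S\cap\{1,\dots,j\}\neq\emptyset]$, and taking the expectation over a random $S$ that includes each $i$ independently with probability $x_i$ replaces $\II[S\cap\{1,\dots,j\}\neq\emptyset]$ by $1-\prod_{i=1}^j(1-x_i)$.

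I do not expect a genuine obstacle here; the entire content is the choice of the right coverage representation — using the \emph{differences} $m_i - m_{i+1}$ as weights and the \emph{nested} suffix sets $B_i = \{i,\dots,n\}$, so that the union over any $S$ telescopes to $\max_{i\in S} m_i$. Once that is in place, the two formulas are immediate specializations of the general WCF identities already proved, with the only bookkeeping being the computation $P_j = \{B_1,\dots,B_j\}$.
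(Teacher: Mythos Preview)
Your proposal is correct and follows essentially the same approach as the paper: the paper also represents $f_y$ as a WCF via the nested suffix sets $C_i=\{i,\ldots,n\}$ with weights $w(i)=m_i-m_{i+1}$, uses the telescoping identity $\bigcup_{i\in S}C_i=C_{\min S}$ to recover $\max_{i\in S}m_i$, identifies $P_i=\{1,\ldots,i\}$, and reads off the two formulas from the general WCF expressions. Your added Abel-summation sanity check is a nice touch but not needed.
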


We remark that the gradient of both $F_y$ and $\Fbar_y$ can be computed in linear time using a recursive procedure. We refer to Appendix~\ref{sec:det-exe} for more details.

\section{Experimental Results}\label{sec:exe_exp}

We demonstrate the practical utility of the proposed framework and compare it to standard baselines. We compare the performance of the algorithms in terms of their wall-clock running time and the obtained utility. We consider the following problems:

\begin{figure}[ht]
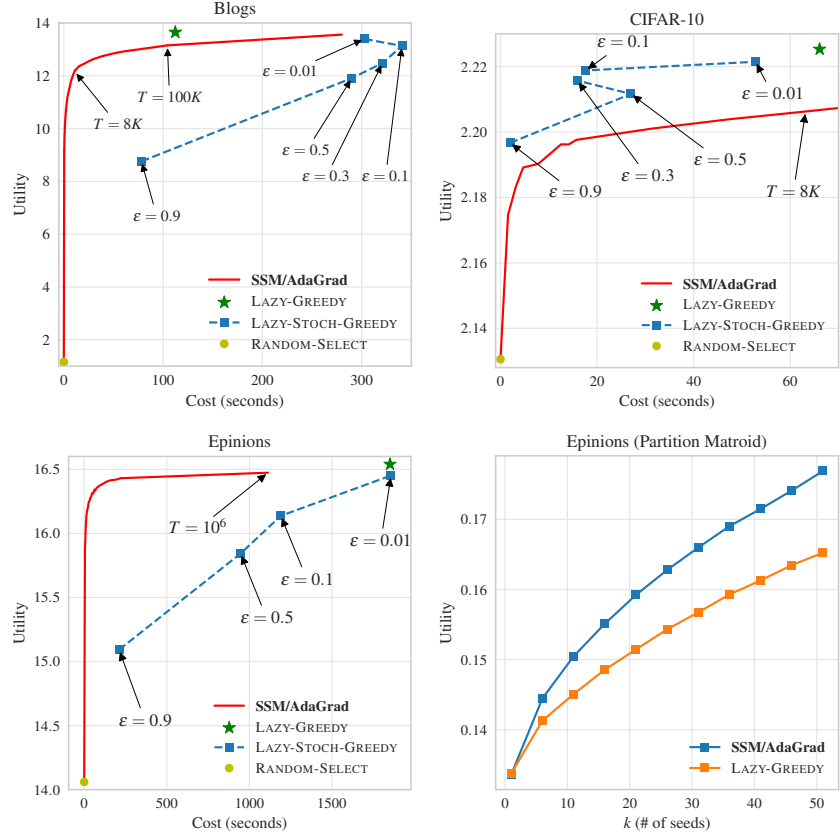

  \centering
  \includegraphics[width=.40\textwidth]{blog-cost-vs-util}
  \includegraphics[width=.40\textwidth]{cost-vs-util-cifar}\\
  \includegraphics[width=.40\textwidth]{epinions-cost-vs-util}
  \includegraphics[width=.40\textwidth]{epinions-partition}
  \caption{In the case of Facility location for Blog selection as well as on influence maximization on Epinions, the proposed approach reaches the same utility \emph{significantly} faster. On the exemplar-based clustering of CIFAR, the proposed approach is outperformed by \textsc{Stochastic-Greedy}, but nevertheless reaches $98.4\%$ of the \textsc{Greedy} utility in a few seconds (after less than $1000$ iterations). On Influence Maximization over \textit{partition matroids}, the proposed approach significantly outperforms \textsc{Greedy}. \label{fig:results}}
\end{figure}

\begin{itemize}[leftmargin=*]
\item \textbf{Influence Maximization for the Epinions network\footnote{\texttt{http://snap.stanford.edu/}}.} The network consists of 75\,879 nodes and 508\,837 directed edges. We consider the subgraph induced by the top 10\,000 nodes with the largest out-degree and use the independent cascade model \cite{kempe}. The diffusion model is specified by a fixed probability for each node to influence its neighbors in the underlying graph. We set this probability $p$ to be $0.02$, and chose the number of seeds $k=50$.

\item \textbf{Facility Location for Blog Selection.} We use the data set used in \cite{glance2005deriving}, consisting of 45\,193 blogs, and 16\,551 cascades. The goal is to detect information cascades/stories spreading over the blogosphere. This dataset is \textit{heavy-tailed}, hence a small random sample of the events has high variance in terms of the cascade sizes. We set $k=100$.

\item \textbf{Exemplar-based Clustering on \textsc{CIFAR}-10.} The data set contains 60\,000 color images with resolution $32\times 32$. We use a single batch of 10\,000 images and compare our algorithms to variants of \textsc{Greedy} over the full data set. We use the Euclidean norm as the distance function and set $k=50$. Further details about preprocessing of the data as well as formulation of the submodular function can be found in Appendix~\ref{appx:exem}.
\end{itemize}

\paragraph{Baselines.}
In the case of cardinality constraints, we compare our stochastic continuous optimization approach against the most efficient discrete approaches (\textsc{Lazy}-)\textsc{Greedy} and (\textsc{Lazy}-)\textsc{Stochastic-Greedy}, which both provide optimal approximation guarantees. For \textsc{Stochastic-Greedy}, we vary the parameter $\varepsilon$ in order to explore the running time/utility tradeoff.  We also report the performance of randomly selected sets. For the two facility location problems, when applying the greedy variants we can evaluate the exact objective (true expectation). In the Influence Maximization application, computing the exact expectation is intractable. Hence, we use an empirical average of $s$ samples (cascades) from the model. We note that the number of samples suggested by Proposition~\ref{prop:iidsamples} is overly conservative, and instead we make a practical choice of $s=10^3$ samples.

\paragraph{Results.}
The results are summarized in Figure~\ref{fig:results}. On the blog selection and influence maximization applications, the proposed continuous optimization approach outperforms \textsc{Stochastic-Greedy} in terms of the running time/utility tradeoff. In particular, for blog selection we can compute a solution with the same utility $26\times$ faster than \textsc{Stochastic-Greedy} with $\varepsilon=0.5$. Similarly, for influence maximization on Epinions we the solution $88\times$ faster than \textsc{Stochastic-Greedy} with $\varepsilon=0.1$. On the exemplar-based clustering application \textsc{Stochastic-Greedy} outperforms the proposed approach. We note that the proposed approach is still competitive as it recovers $98.4\%$ of the value after less than thousand iterations.

We also include an experiment on Influence Maximization over \textit{partition matroids} for the Epinions network. In this case, \textsc{Greedy} only provides a $1/2$ approximation guarantee and \textsc{Stochastic-Greedy} does not apply. To create the partition, we first sorted all the vertices by their out-degree.  Using this order on the vertices, we divided the vertices into two partitions, one containing vertices with even positions, other containing the rest. Figure~\ref{fig:results} clearly demonstrates that the proposed approach outperforms \textsc{Greedy} in terms of utility (as well as running time).

\paragraph{Acknowledgments}
The research was partially supported by ERC StG 307036. We would like to thank Yaron Singer for helpful comments and suggestions. 

\bibliographystyle{plainnat}
\bibliography{stochastic}

\newpage
\appendix
\section{Proof of Lemma \ref{lem:upper}}\label{appx:proof-lem-1}
Here we prove the inequality mentioned in the lemma. Proof of the fact of being Fenchel biconjugate is in  Appendix~\ref{appx:fenchel}.

We prove the left-hand-side inequality, since the right-hand-side inequality is a consequence of Fenchel biconjugate-ness. 

Let $\theta := \sum_{i=1}^\ell x_i$. We note from the inequality $1 - x \leq \exp{(-x)}$ that $\prod_{i=1}^\ell (1 - x_i) \leq \exp{(-\theta)}$. We thus obtain
$$1 - \prod_{i=1}^\ell (1 - x_i) \geq 1 - \exp{(-\theta)}.$$
Now, if $\theta \geq 1$ then the result is clear. Also, if $\theta < 1$, then we note that the function $(1-\exp{(-\theta)}) / \theta $ is decreasing for $\theta \in (0,1)$, and hence, $1-\exp{(-\theta)} \geq \theta (1 - 1/e)$. The left-hand-side inequality thus follows immediately.

\section{Proof of Proposition \ref{prop:iidsamples}}\label{appx:hoff}
Note that the total number of subsets of cardinality less than $k$ is bounded from above by $k \binom{n}{k}$.  For each such set $S$ we want the estimate $\hat{f}(S) := \frac{1}{s}\sum_{i=1}^s f_{\gamma_i}(S)$ to be at most $\epsilon$ away from $f(S)$.  Also, note that the function $\hat{f}$ is itself a submodular function and maximizing it would give a $(1-1/e)$-approximation to its optimum. Hence, it is enough to have enough samples such that for all subsets $S$ of cardinality at most $k$ the two values $f(S)$ and $\hat{f}(S)$ differ by at most epsilon.  By using Hoeffding's inequality and a union bound over all the subsets of cardinality at most $k$ (note that $\log (n \binom{n}{k}) = \Ocal(k \log (n))$) we get the result.

\section{Proof of Lemmas \ref{lem:infl-wcf} and \ref{lem:facil-wcf}}

\subsection{Lemma \ref{lem:infl-wcf}}
\begin{proof}
Let $A := \{ C_v\;|\; v \in V \}$, where $C_v$ is the set of vertices reachable from $v$. By construction, there is a one-to-one correspondence between elements of $A$ and $V$, namely $C_v\leftrightarrow v$. For $T\subseteq A$, let $S\subseteq V$ be its corresponding subset in $V$, i.e. 
$S = \{v\in V\;|\; v\leftrightarrow C_v, C_v \in T\}$. It's obvious that $\bigcup_{v\in S} C_v = \bigcup_{C_v\in T} C_v$. Setting $g(T) = \frac{|T|}{|V|}$, makes $f'_G(T) := g(\bigcup_{C_v\in T} C_v)$ a WCF. But $f'_G(T) = f_G(S)$, so $f_G(\cdot)$ is also a WCF.

Moreover, for each $v \in V$, the set $P_v$ is the set of all elements of $A$ that contain $v$, which are precisely those vertices from which there is a (directed) path to $v$. We also relax our notation, and replace any element of $A$ by its correspondent in $V$. Hence, 
\begin{align*}
&F_G(\xb) = \EE_S[f_G(S)] = \frac{1}{|V|}\sum_{v\in V} (1 - \textstyle\prod_{u\in P_v}(1-x_u))\\
&\Fbar_G(\xb) = \frac{1}{|V|}\sum_{v\in V}\min\{1, \textstyle\sum_{u\in P_v}x_u\},
\end{align*}
 which are poly-time computable since one can find $P_v$ with a simple BFS algorithm in $\Ocal(|V|+|E|)$ for each $v\in V$.

\end{proof}

\subsection{Lemma \ref{lem:facil-wcf}}
\begin{proof}
Write $f(\cdot)$ instead of $f_y(\cdot)$.

  Let $V = \{C_i\;|\; 1\leq i\leq n\}$, where
$C_i = \{i,\ldots, n\}$, and let $w(i) = m_i - m_{i+1}$ (set $m_{n+1}=0$). Note that there is a natural bijection
between $V$ and $U$, namely $C_i\leftrightarrow i$. Let $g$ be the modular function with weights $w(i)$, defined on $2^U$, and define the WCF $f':2^V\rightarrow \RR_+$ as  
\begin{align}\label{eq:exe}
  f'(S) := \textstyle g(\bigcup_{i\in S} C_i) = \sum_{j\in \bigcup_{i\in S} C_i} w(j).
\end{align}
Since $C_i$'s are forming a decreasing chain, $\bigcup_{i\in S} C_i = C_{\min S}$ and \eqref{eq:exe} becomes
$$f'(S) = \sum_{j\in C_{\min S}} w(j) = \sum_{j = \min S}^n w(j) = m_{\min S} - m_{n+1} = \max_{i\in S} m_i,$$
which is exactly $f(S)$. 

Furthermore, $P_i$ is simply the set $\{1, \ldots, i\}$.
Hence, we can write the multilinear extension and the corresponding upper bound as
\begin{align*}
  &\textstyle F_y(\xb) = \sum_{i=1}^n (m_i - m_{i+1})(1-\prod_{j=1}^i (1-x_j)),\\
  &\textstyle \Fbar_y(\xb) = \sum_{i=1}^n (m_i - m_{i+1})\min\{1, \sum_{j=1}^i x_j\}. 
\end{align*}
\end{proof}

\section{Fast Algorithms for Projection and Rounding}\label{appx:proj}
In this section, we show how projection (w.r.t. Mahalanobis norm) can be done in time $O(n \log n)$ and rounding in time $O(n)$ for the uniform matroid. This projection algorithm also proves to be useful in case of partition matroid polytope. We also discuss a projection method on general matroid base polytopes, based on the method of \citet{active-set-bach}, which needs to solve a total number of $n$ submodular function minimization (SFM) tasks (details below).

\subsection{Efficient projection on the uniform matroid}
Let $G$ be a diagonal matrix with positive entries, $G = \mathrm{diag}(g_1, \ldots, g_n)$. Our aim is to project a vector $\yb \in \RR_{+}^n$ on the uniform matroid base polytope defined as
\[
  P_k = \{\xb\in\RR^n_+\;|\; \textstyle\sum x_i = k,\; 0\leq x_i\leq 1\}.
\]
The polytope $P_k$ is the convex hull of all the vectors that have precisely $k$ ones and $n-k$ zeros.
Projecting $\yb$ onto $P_k$  entails finding a point $\xb$ in $P_k$, such that
\[
  \xb = \argmin_{\xb \in P_k}\tgnorm{\xb - \yb}^2 := \argmin_{\xb \in P_k}\,(\xb-\yb)^\top G\,(\xb-\yb),
\]
where $\tgnorm{\cdot}$ is the Mahalanobis norm (i.e. the Mahalanobis distance to $\zero$). Note that in the special case of $G = I$, this problem boils down to orthogonal projection of $\yb$ onto $P_k$. We first transform this problem into an orthogonal projection, and solve that projection in $O(n\log n)$.
\begin{align}
	\xb & = \argmin_{\xb \in P_k}\,(\xb-\yb)^\top G\,(\xb-\yb) \nonumber \\
		& = \argmin_{\xb \in P_k} \tnorm{\ub-\wb}^2,\quad\text{where}\; \ub = G^{1/2}\xb\;\text{and}
		\; \wb = G^{1/2}\yb \nonumber \\
		& = G^{-1/2} \argmin_{\ub \in G^{1/2}P_k} \tnorm{\ub-\wb}^2 \label{eq:ortho-version},
\end{align}
where \eqref{eq:ortho-version} suggests an orthogonal projection on the polytope $G^{1/2}P_k$. By defining the vector $\cb = (g_1^{-1/2}, \ldots, g_n^{-1/2})$, one has $G^{1/2}P_k = \{ \xb\in\RR^n_+\;|\; \cb^\top \xb = k, 0 \leq x_i \leq \tfrac{1}{c_i} \}$. Theorem~\ref{thm:proj} shows that this projection can be done in $O(n\log n)$, and Algorithm~\ref{alg:proj} depicts the algorithm achieving the solution.
\begin{theorem}\label{thm:proj}
	Let $P = \{ \xb\in\RR^n_+\;|\; \cb^\top \xb = k, 0 \leq x_i \leq \tfrac{1}{c_i} \}$, where $\cb\in \RR^n_+$ is given. Then for any given point $\yb\in\RR^n_+$ one can find the solution to $\argmin_{\xb\in P}\frac{1}{2}\tnorm{\xb - \yb}^2$ in $O(n\log n)$ time. Moreover this solution is unique.
\end{theorem}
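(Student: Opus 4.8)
The plan is to reduce the problem to a one-dimensional search for a Lagrange multiplier associated with the single equality constraint $\cb^\top\xb = k$, and then show that the optimal multiplier can be located in $O(n\log n)$ time by sorting the breakpoints of a piecewise-linear monotone function.

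First I would write the KKT conditions for $\min_{\xb\in P}\tfrac12\|\xb-\yb\|^2$. Introducing a multiplier $\lambda\in\RR$ for the equality $\cb^\top\xb=k$ and multipliers for the box constraints $0\le x_i\le 1/c_i$, stationarity gives $x_i - y_i - \lambda c_i = \mu_i - \nu_i$ with the usual complementary slackness, which after eliminating the box multipliers yields the closed form
\[
  x_i(\lambda) = \min\Bigl\{\tfrac{1}{c_i},\ \max\{0,\ y_i + \lambda c_i\}\Bigr\}.
\]
So for each candidate $\lambda$ the primal point is determined coordinatewise, and the problem collapses to finding the unique $\lambda^\star$ with $\sum_i c_i\, x_i(\lambda^\star) = k$. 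Since $P$ is a nonempty (I would note feasibility is implicit, e.g.\ $\sum 1 = $ appropriate) compact convex set and the objective is strictly convex, the minimizer exists and is unique; uniqueness of $\lambda^\star$ then follows from strict monotonicity of $\lambda\mapsto\sum_i c_i x_i(\lambda)$ on the region where it is not saturated at $0$ or $k\cdot(\text{something})$.

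Next I would analyze the function $h(\lambda) := \sum_{i=1}^n c_i\, x_i(\lambda)$. Each summand $c_i x_i(\lambda)$ is a nondecreasing, continuous, piecewise-linear function of $\lambda$ with exactly two breakpoints, namely $\lambda = -y_i/c_i$ (where $x_i$ leaves $0$) and $\lambda = (1/c_i - y_i)/c_i$ (where $x_i$ hits its cap $1/c_i$); between consecutive breakpoints $h$ is affine with a slope that is the sum of $c_i^2$ over the currently ``active'' indices. Hence $h$ is continuous, nondecreasing, piecewise linear with at most $2n$ breakpoints, and $h(\lambda)\to 0$ as $\lambda\to-\infty$ and $h(\lambda)\to\sum_i 1$ as $\lambda\to+\infty$; since $0 \le k \le \sum_i 1$ the level set $\{h = k\}$ is nonempty. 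The algorithm then sorts the $2n$ breakpoints in $O(n\log n)$, and sweeps through them maintaining the running value of $h$ and its current slope incrementally (each breakpoint crossed adds or removes one $c_i^2$ term and changes $h$ by a computable amount), until it brackets the target value $k$; within that final affine piece $\lambda^\star$ is found by solving a linear equation, and then $\xb = \xb(\lambda^\star)$ is read off in $O(n)$.

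The main obstacle — really the only subtle point — is the bookkeeping in the sweep: one must carefully handle the two distinct kinds of breakpoints (activation at the lower bound versus saturation at the upper bound), maintain the correct active set and slope through each event, and argue that the incremental update of $h$ is exact and $O(1)$ per event, so the total work after the sort is $O(n)$. Degenerate cases (coincident breakpoints, $h$ exactly equal to $k$ at a breakpoint, an index whose two breakpoints are equal, or $y_i$ already infeasible) need to be absorbed into the monotonicity argument, but none of them break continuity or monotonicity of $h$, so the same bracketing logic applies; I would dispatch these with a short remark rather than separate cases. Combining the $O(n\log n)$ sort with the $O(n)$ sweep and the $O(n)$ reconstruction of $\xb$ gives the claimed bound, and uniqueness has already been established from strict convexity.
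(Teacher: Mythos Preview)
Your proposal is correct and follows essentially the same approach as the paper: reduce to a one-dimensional search for the Lagrange multiplier, express the optimal $\xb$ coordinatewise as a clipped affine function of that multiplier, observe that $h(\lambda)=\cb^\top\xb(\lambda)$ is continuous, monotone, and piecewise linear with $2n$ breakpoints, sort the breakpoints, and sweep. The only cosmetic difference is a sign convention---the paper uses $x_i(\alpha)=\min\{1/c_i,\max\{0,y_i-\alpha c_i\}\}$ so that $h$ is decreasing, whereas you use $y_i+\lambda c_i$ so that $h$ is increasing; with $\lambda=-\alpha$ your breakpoints $-y_i/c_i$ and $(1-y_ic_i)/c_i^2$ coincide with the paper's $\bar\alpha_i$ and $\ubar\alpha_i$.
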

\begin{proof}
	Let us begin by writing the KKT optimality conditions for the projected vector $\xb$. The Lagrangian is defined by
	$$\mathcal{L}(\xb, \alpha, \bbeta, \bgamma) = \tfrac{1}{2}\tnorm{\xb - \yb}^2 + \alpha(\cb^\top \xb - k) - \bbeta^\top \xb + \bgamma^\top(\xb - 1/\cb),$$
	where $\alpha\in\RR$ and $\bbeta,\bgamma\in\RR^n_+$. Minimizing the Lagrangian w.r.t. $\xb$ gives for each $i\in [n]$:
	\begin{align}
		x_i = y_i - \alpha c_i + \beta_i - \gamma_i,
	\end{align}
	and also considering complementary slackness, we should have $\beta_i x_i = 0$ and $\gamma_i(x_i - 1/c_i) = 0$.
	If one provides suitable $\xb$ and $\alpha, \bbeta, \bgamma$ that satisfy the equations above, then $\xb$ would be the optimal solution. In what follows, we construct $\xb$ and provide suitable $\alpha, \bbeta, \bgamma$.
	
	For each $\alpha\in \RR$, define $\xb(\alpha) := \min\{\tfrac{1}{\cb} , \max \{0, \yb - \alpha \cb\}\}$, where $\min$ and $\max$ are applied element-wise. By definition, one has $\zero \leq \xb(\alpha) \leq \tfrac{1}{\cb}$. Let $h(\alpha) := \cb^\top \xb(\alpha)$. We claim that if for a value of $\alpha$, $h(\alpha) = k$, we are done, since $\xb(\alpha) \in P$, and it satisfies the KKT conditions: If $x(\alpha)_i = 0$, by definition of $\xb(\alpha)$ it means that $y_i - \alpha c_i \leq 0$, so we can set $\beta_i = -(y_i - \alpha c_i) \geq 0$ and $\gamma_i = 0$. If $x(\alpha)_i = \frac{1}{c_i}$, it means $y_i - \alpha c_i \geq \frac{1}{c_i}$, so we can set $\beta_i = 0$ and $\gamma_i = y_i - \alpha c_i - \frac{1}{c_i} \geq 0$. Otherwise, $0<x(\alpha)_i<\frac{1}{c_i}$, which in that case we set $\beta_i = \gamma_i = 0$.
	
	So it suffices to provide an $\alpha$ such that $h(\alpha) = k$. For each $i\in [n]$, define $\ubar{\alpha}_i := \frac{y_ic_i - 1}{c_i^2}$ and $\bar{\alpha}_i := \frac{y_i}{c_i}$. It's obvious that if $\alpha \leq \ubar{\alpha}_i$ then $x(\alpha)_i = \frac{1}{c_i}$, if $\alpha \geq \bar{\alpha}_i$ then $x(\alpha)_i = 0$, and otherwise $x(\alpha)_i = y_i - \alpha c_i$. So $x(\alpha)_i$ is a continuous decreasing function, and so will be $h(\alpha)$. Note that if $\alpha \leq \min\{\ubar{\alpha}_i\}$, then $h(\alpha) = n$ and if $\alpha \geq \max\{\ubar{\alpha}_i\}$, then $h(\alpha) = 0$. So by continuity, there is some $\alpha^*$ such that $h(\alpha^*) =k$. Now let $\alpha_1 < \ldots < \alpha_s$ be the set of all distinct values among $\ubar{\alpha}_i$ and $\bar{\alpha}_i$. It's clear that for all $\alpha \in [\alpha_i, \alpha_{i+1}]$, $h(\cdot)$ is a linear function. By exploiting this fact, we can find $\alpha^*$ by searching through these endpoints. Detailed procedure is explained in Algorithm~\ref{alg:proj}.
\end{proof}

\begin{algorithm}[t]
   \caption{Projection on the Scaled Uniform Matroid Polytope}
   \label{alg:proj}
\begin{algorithmic}[1]
\STATE \textbf{Input:} vectors $\yb, \cb \in \RR_+^n$ and $k\in\NN$, s.t. $k \leq n$.
\STATE $\ubar{\alpha}_i \leftarrow \frac{y_ic_i - 1}{c_i^2}, \bar{\alpha}_i \leftarrow \frac{y_i}{c_i}, \forall i\in [n]$
\STATE $S \leftarrow \{ \ubar{\alpha}_i \} \cup \{\bar{\alpha}_i \}$
\STATE Sort elements in $S$, so that $S = \{\alpha_1 < \ldots < \alpha_s \}$
\STATE $h \leftarrow n, \quad \alpha \leftarrow \min S - 1, \quad m \leftarrow 0$
\FOR{$i\in [s]$}
	\STATE  $h' \leftarrow h + (\alpha_i - \alpha)m$  \COMMENT{calculate function value at the new point using current slope $m$}\\
	\COMMENT{check if $\alpha^*$ is between $\alpha_i$ and $\alpha_{i-1}$}
	\IF{$h' < k \leq h$}
		\STATE $\alpha^* \leftarrow (\alpha_i - \alpha) \frac{h - k}{h - h'} + \alpha$
		\STATE return the projected vector $\xb$ as follows:
		$$
		x_j = \left\{
		\begin{array}{ll} 
			1/c_j & \alpha^* < \ubar{\alpha}_j \\
			y_j - \alpha^* c_j & \ubar{\alpha}_j \leq \alpha^* \leq \bar{\alpha}_j \\
			0 & \bar{\alpha}_j < \alpha^* 
		\end{array} \right.
		$$
	\ENDIF
	\STATE $m \leftarrow m - \sum_{j:\ubar{\alpha}_j = \alpha} c_j^2$ \COMMENT{for these $j$, $x(\alpha)_j$'s slope is changing from $0$ to $-c_j$}
	\STATE $m \leftarrow m + \sum_{j:\bar{\alpha}_j = \alpha} c_j^2$ \COMMENT{for these $j$, $x(\alpha)_j$'s slope is changing from $-c_j$ to $0$}
	\STATE $h \leftarrow h', \quad \alpha \leftarrow \alpha_i$
\ENDFOR
\end{algorithmic}
\end{algorithm}

\subsection{Efficient projection on Partition matroid base polytope}
Let $V$ be a ground set and $A_1,\ldots, A_m$ be a partition of $V$. A \textit{partition matroid}, includes all sets $S\subseteq V$ such that for all $i\in[m]$ we have $|A_i \cap S| \leq k$. It's easy to see that the base polytope would be
\[
	\Pcal = \left\{\xb\in[0,1]^V\;|\; \forall i \in [m]: \sum_{j\in A_i} x_j = k\right\}.
\]
 In order to project onto $\Pcal$, we first note that it becomes a separable objective, partitioned over $A_i$. This means that it is sufficient to project $\yb|_{A_i}$ onto the uniform matroid of $A_i$, for all $i\in[m]$. Since each projection takes $\Ocal(|A_i|\log |A_i|)$ time, the total process would be $\Ocal(n\log n)$.

\subsection{Projection on general matroid base polytopes} 
Let us now ask whether there is an efficient projection algorithm for general matroid polytopes. Here, we argue that the method proposed by \citet{active-set-bach} would be a reasonable candidate in the case of general matroid polytopes.

Let $g:2^V\rightarrow \RR_+$ be a submodular function, such that $g(\emptyset) = 0$, and let $g_L: \RR^n \to \mathbb{R}_+$ be its Lovasz extension. 
We define the \textit{base polytope} of $g$ as the set 
$$\mathcal{B} = \{\sb\in\RR^n\;|\; \sb(V)=g(V), \forall A\subset V: \sb(A) \leq g(A)\}.$$
It can be shown \cite{bach-tutorial} that  the Lovasz extension is the \textit{support function} of this polytope, i.e.
 \begin{equation} \label{lovasz-support}
 g_L(\xb) = \sup_{\sb\in \mathcal{B}} \sb^\top \xb.
 \end{equation}
For any $\yb\in \RR^n$ consider the task of minimizing  the following objective with respect to $\xb\in \RR^n$:
\begin{align}\label{eq:tv}g_L(\xb) - \yb^\top \xb + \tfrac{1}{2}\tnorm{\xb}^2.\end{align}
By using \eqref{lovasz-support}, we can rewrite \eqref{eq:tv} in the following dual form
\begin{align} 
\label{eq:dualtv}\min_{\xb\in\RR^n} g_L(\xb) -\yb^\top \xb + \tfrac{1}{2}\tnorm{\xb}^2 = \max_{\sb\in \mathcal{B}} -\tfrac{1}{2}\tnorm{\sb-\yb}^2,
\end{align}
in which the latter expression is precisely the projection of $\yb$ on $\mathcal{B}$. 
In \citet{active-set-bach}, the authors have exploited the structural properties of the Lovasz extension and the faces of the base polytope to create the so-called ``Active-set" algorithm. The Active-set algorithm iteratively solves instances of isotonic regression as well as submodular function minimization tasks, whose overall complexity is less than a single submodular function minimization call‌ (recall that by submodular function minimization, we mean the task of solving $\min_{\xb\in [0,1]^n} (g_L(\xb) - \yb^\top \xb)$). By knowing \eqref{eq:dualtv}, the algorithm can be viewed as a sequence of iterative projections on outer-approximations of the base polytope.

For any matroid, its associated rank function is a monotone submodular function. Also, the base polytope for a matroid's rank function is exactly the matroid base polytope. As a result of \eqref{eq:dualtv},  we can use the Active-set algorithm to perform projections on the matroid base polytope. Interestingly, in the case of uniform matroids, the main parts of our projection scheme has similar counterparts as in the Active-set scheme. However, runtime complexity is significantly different due to several differences such as optimality checks: In our approach, this check is done in $\Ocal(1)$, but in Active-set scheme, in each iteration, one should solve approximately $\Ocal(n)$ submodular minimization tasks. However, the Active-set approach is more general, as explained above.

\subsection{The \textsc{Randomized-Pipage-Rounding} procedure}
The randomized pipage rounding procedure was first proposed in \cite{vondrak-pipage} for any matroid $\Mcal$. 
Here, we show how this procedure can be efficiently done (in linear time) for the uniform matroid. 
Suppose we have a matroid $\Mcal$ and a point $\yb := (y_1, \cdots, y_n)$ in its corresponding base polytope. We want
to round $\yb$ to a vertex of the base polytope. In each step of the algorithm,
one has a fractional solution $\yb$ and a tight set $T$ containing at least two fractional variables (recall that if the
matroid rank function is $r(\cdot)$, a set $T$ is tight if $\yb(T) = r(T)$; Tight
sets are exactly those constraints in the base polytope who are tight at $y$). It modifies two fractional
variables in such a way that their sum remains constant, until some variable
becomes integral or a new constraint becomes tight. Note that since the sum of
all of elements of $\yb$ is an integer (rank of the matroid), there exist at least two
fractional variables in the case that the point is fractional.

For our purpose, we are faced with uniform matroid, which we argue that finding tight constraints is easy, i.e., we can compute the
\textsc{HitConstraint} subroutine in a very fast way. This subroutine is given a
fractional point $\yb$ and two variables $i$ and $j$, and tries to increase $y_i$
and decrease $y_j$ simultaneously, and find a new tight constraint $A$. For
sure, one should search for this new tight set through the sets having $i$
inside them but not $j$. So let $\Acal$ denote the family of all subsets
containing $i$ and not containing $j$. So we are interested in $\delta =
\min_{A\in \Acal} (r(A) - \yb(A))$, the maximum increase in $y_i$ (and decrease in
$y_j$) that does not violate any polytope condition, but produces a new tight
constraint. We claim that $\delta$ is trivial in case of the uniform matroid: $\delta = \min\{1-y_i, y_j\}$. Also the new tight set $A$ is either $\{i\}$ or $V - j$.

This simple form of  the \textsc{HitConstraint} gives an efficient
algorithm for \textsc{Randomized-Pipage-Rounding}, which we describe in
Algorithm~\ref{alg:pip}. Moreover, one has the following Theorem:
\begin{theorem}
	Let $\Mcal$ be the uniform matroid and $\yb$ be a fractional point inside $\Pcal(\Mcal)$. Then \textsc{Randomized-Pipage-Rounding} returns an integral point $\yb_{\text{rnd}}$ in $\Ocal(n)$ time, such that
	$$\EE[F(\yb_{\text{rnd}})] \geq F(\yb).$$
\end{theorem}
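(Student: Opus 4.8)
The plan is to combine the \emph{martingale} structure of randomized pipage rounding with a direction-restricted convexity property of the multilinear extension $F$. Recall that each iteration of \textsc{Randomized-Pipage-Rounding} picks a tight set $T$ with two fractional coordinates $i,j\in T$, invokes \textsc{HitConstraint} in both orientations to obtain the maximal feasible increments $\delta^{+}$ (increase $y_i$, decrease $y_j$) and $\delta^{-}$ (increase $y_j$, decrease $y_i$), and then moves to $\yb+\delta^{+}(\mathbf{e}_i-\mathbf{e}_j)$ with probability $\delta^{-}/(\delta^{+}+\delta^{-})$ and to $\yb-\delta^{-}(\mathbf{e}_i-\mathbf{e}_j)$ with probability $\delta^{+}/(\delta^{+}+\delta^{-})$. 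These probabilities are exactly the ones making the expected displacement zero, so the iterates $\yb^{(0)}=\yb,\yb^{(1)},\dots$ form a martingale, $\EE[\yb^{(t+1)}\mid\yb^{(t)}]=\yb^{(t)}$, that never leaves $\Pcal(\Mcal)$ since each move is sum-preserving and $\delta^{\pm}$ are chosen to respect the box constraints.

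First I would establish the structural fact that $F$ is convex along every direction $\mathbf{e}_i-\mathbf{e}_j$. Since $F$ is multilinear one has $\partial^{2}F/\partial x_i^{2}=\partial^{2}F/\partial x_j^{2}=0$, while submodularity of $f$ gives $\partial^{2}F/\partial x_i\partial x_j\le 0$; hence, writing $\phi(s):=F(\yb+s(\mathbf{e}_i-\mathbf{e}_j))$, we get $\phi''(s)=-2\,\partial^{2}F/\partial x_i\partial x_j\ge 0$. A single pipage step replaces $\yb^{(t)}$ by $\yb^{(t)}+T_t(\mathbf{e}_i-\mathbf{e}_j)$ for a scalar random variable $T_t$ with $\EE[T_t\mid\yb^{(t)}]=0$, so Jensen's inequality applied to the convex $\phi$ yields $\EE[F(\yb^{(t+1)})\mid\yb^{(t)}]=\EE[\phi(T_t)\mid\yb^{(t)}]\ge\phi(0)=F(\yb^{(t)})$. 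Taking total expectations and telescoping over the iterations gives $\EE[F(\yb_{\mathrm{rnd}})]\ge F(\yb)$.

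For termination and the running-time bound I would use the explicit form of \textsc{HitConstraint} for the uniform matroid established above, $\delta=\min\{1-y_i,y_j\}$ with the new tight set being $\{i\}$ or $V\setminus\{j\}$. Consequently every move forces at least one coordinate to $0$ or $1$; integral coordinates are never selected again, and because $\sum_i y_i=k\in\NN$ is preserved the number of fractional coordinates can never equal $1$, so the procedure performs at most $n$ steps and halts at a $0/1$ vector. Such a vector is precisely a vertex of $P_k$, i.e. the indicator of a size-$k$ set $S\in\Mcal$, and evaluating \eqref{multilinear} at it gives $F(\yb_{\mathrm{rnd}})=f(S)$. Each step costs $\Ocal(1)$ (two evaluations of $\min\{1-y_i,y_j\}$ plus one coin flip), so the total running time is $\Ocal(n)$.

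The crux of the argument is the convexity of $F$ along $\mathbf{e}_i-\mathbf{e}_j$ together with the observation that one pipage step is an honest zero-mean perturbation confined to a single such direction, so that Jensen applies verbatim; once this is in place the telescoping and the $\Ocal(n)$ bookkeeping are routine. A minor point I would handle with care is that the chosen tight set $T$ may contain more than two fractional coordinates, but since the move stays along $\mathbf{e}_i-\mathbf{e}_j$ and within the face determined by $T$, neither the martingale property nor the convexity argument is affected.
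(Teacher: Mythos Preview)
Your argument is correct and is exactly the standard one the paper defers to: the paper does not spell out its own proof here, it only states that ``the proof of this algorithm's correctness is similar to the original one given in \cite{vondrak-pipage}.'' What you wrote---convexity of $F$ along $\mathbf{e}_i-\mathbf{e}_j$ via $\partial^2F/\partial x_i\partial x_j\le 0$, the martingale property of the iterates, Jensen, and the $\Ocal(n)$ termination count from the explicit $\delta=\min\{1-y_i,y_j\}$---is precisely that referenced argument specialized to the uniform matroid, so you have faithfully reconstructed what the paper merely cites.
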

 The proof of this algorithm's correctness is similar to the original one given in  \cite{vondrak-pipage}. 
It is also noteworthy that our algorithm runs in
$\Ocal(n)$ time compared to $\Ocal(n^2)$, as described in \cite{vondrak-pipage}.

\begin{algorithm}[t]
   \caption{\textsc{Randomized-Pipage-Rounding} for the Uniform Matroid}
   \label{alg:pip}
\begin{algorithmic}[1]
\STATE \textbf{Input:} fractional $\yb$; $k\in \NN$ defining the matroid rank
\WHILE{$\yb$ fractional}
\STATE Select $i$ and $j$ among fractional variables
\IF{ $y_i + y_j < 1$}
\STATE Let $p = y_j/(y_i + y_j)$
\STATE With probability $p$, set $y_i \leftarrow 0$ and $y_j \leftarrow y_i + y_j$, and with probability $1-p$, set $y_i \leftarrow y_i + y_j$ and $y_j \leftarrow 0$.
\ELSE
\STATE Let $p = (1-y_i)/(2-y_i-y_j)$
\STATE With probability $p$, set $y_i \leftarrow y_i + y_j - 1$ and $y_j \leftarrow 1$, and with probability $1-p$, set $y_i \leftarrow 1$ and $y_j \leftarrow y_i+y_j-1$.
\ENDIF
\ENDWHILE
\STATE {\bfseries return} $\yb$
\end{algorithmic}
\end{algorithm}

\section{Details on experiments} \label{apx_details_experiments}
\subsection{Influence Maximization}\label{sec:det-inf}
Our  approach is to obtain samples from the product distribution, $(G, v)\sim \Gcal\times \Ucal$, and compute the set $P_v$ (see the definitions for the class of weighted coverage functions in Section~\ref{section3}). Note that the vertex $v$ is chosen uniformly at random. Since $P_v$ is smaller compared to $G$, it is less efficient to sample $G$ completely. Instead, while doing the BFS starting from $v$, we select edges with probability $p$ and proceed. Note that in case of maximizing the upper-bound \eqref{eq:ubound_inf}, whenever the sum of $x_i$ visited so far exceeds $1$, one can stop  and return $\zero$, otherwise return $\one_{P_v}$ in the end. This approach is quite fast, but may need  too many iterations to converge, because of its locality (i.e. we only take one vertex in each iteration). Note that the size of the gradient in this case is at most $\sqrt{|P_v|} \leq \sqrt{n}$.

\subsection{Facility Location}\label{sec:det-exe}
Computing the (stochastic) gradient for the concave upper bound can be done in linear time.
Let $h$ be the first index that $\sum_{j=1}^h x_j \geq 1$, then a vector in sub-gradient of $\Fbar(\cdot)$ is simply
\begin{align}\label{eq:ubound_grad_exe}
\gb = (m_1 - m_h, \ldots, m_{h-1} - m_h, 0, 0,\ldots, 0).
\end{align}

In case of the multilinear extension, we give a linear time algorithm for computing the gradient. Let $h$ be the first index that $x_h = 1$ (if no such index exists, then set $h = n+1$). It's clear from \eqref{eq:fac-mult} that $\frac{\partial F}{\partial x_i}(\xb) = 0$ for $i=h+1,\ldots,n$. For $i=h, h-1, \ldots, 1$ one has the following recursion:
$$\frac{\partial F_e}{\partial x_i}(\xb) = \frac{1-x_{i+1}}{1-x_i}\frac{\partial F_e}{\partial x_{i+1}}(\xb) + (m_{i} - m_{i+1})\prod_{j=1}^{i-1}(1-x_j),$$
which can be done completely in linear time.

\subsection{Exemplar-based Clustering}\label{appx:exem}
Let $V$ be a set of points. One can quantify the representativeness a set of exemplars $S\subseteq V$ by the loss function $L(S) = \frac{1}{|S|}\sum_{v\in V} \min_{s\in S} \|v - s\|_2$.  Finding the best $k$ exemplars is equivalent to solving $\min_{|S| = k} L(S)$. By introducing an appropriate phantom element $e_0$ we can turn $L(\cdot)$ into a monotone submodular function \cite{gomez-exem}: $f(S)= L(\{e_0\}) - L(S \cup \{e_0\})$. Thus maximizing $f$ is equivalent to minimizing $L$. In our experiments, to ensure non-negativity of the function values, we transform our dataset $V$ by the transformation $T:\RR^m \rightarrow \RR^m$,
\[
T(\xb) = \frac{3}{\sqrt{m}}\one + \frac{\xb - \bar{\xb}}{\|\xb - \bar{\xb}\|_2}, \quad \text{where}\; \bar{\xb} = \frac{1}{|V|}\sum_{\xb\in V} \xb,
\]
and set $e_0 = \zero$.
\section{Concave Envelope Evaluation}\label{appx:fenchel}
Here we prove Lemma~\ref{lem:upper}. Define $f(\xb)$ as follows:
\begin{align*}
	f(\xb) = 
	\begin{cases}
		1 - \prod_{i=1}^n (1-x_i) & \xb \in [0,1]^n \\
		-\infty & \text{Otherwise}
	\end{cases}.
\end{align*}
We first compute the Fenchel concave dual of $f$, which is defined as 
\begin{align}\label{eq:fenchel-dual}
	f_*(\yb) = \inf \{ \yb^\top \xb - f(\xb) \;|\; \xb \in \RR^n \}.
\end{align}
For brevity, let us define $h(\xb, \yb) := \yb^\top \xb - f(\xb)$. We partition $\RR^n$ into several subsets (cases below) and compute the infimum \eqref{eq:fenchel-dual} for each subset and take the minimum over all partitions.

\underline{Case I, $\xb \in (0,1)^n$:} Here we can compute the infimum by setting the gradient equal to zero. For a fixed $\yb\in\RR^n$ we have
\begin{align*}
	\nabla_\xb h(\xb, \yb) = \zero \iff y_i = \frac{\partial f}{\partial x_i} = \frac{\prod_{j=1}^n (1-x_j)}{(1-x_i)}.
\end{align*}
Clearly $y_i > 0$. Let us define $P = \prod_i (1-x_i)$. We then have $y_1\cdots y_n = P^{n-1}$, and then $x_i = 1 - P/y_i$. Since $x_i > 0$, we should have $y_i > P$. The following lemma gives  the necessary condition on $\yb$ for this to happen.
\begin{lemma}
	Let $y_1,\ldots,y_n \in \RR^+$, $n\geq 2$, and assume that $
		\forall i\in[n]:\; y_i > \sqrt[n-1]{y_1\cdots y_n}$. Then we have $y_i < 1$ for all $i\in[n]$.
\end{lemma}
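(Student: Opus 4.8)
The plan is to work directly with the quantity $G := \sqrt[n-1]{y_1\cdots y_n}$ and to exploit the fact that it is normalized precisely so that $G^{n-1} = y_1\cdots y_n$. The hypothesis states that every factor $y_i$ strictly exceeds $G$, and the goal is the \emph{reverse} bound $y_i < 1$; the trick is that these two facts together force $G < 1$ and, more usefully, pin down each individual $y_i$ by comparing a partial product against $G^{n-1}$.

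First I would fix an arbitrary index $i\in[n]$ and isolate the partial product $\Pi_i := \prod_{j\neq i} y_j$, which is a product of exactly $n-1$ strictly positive reals (here $n\geq 2$ is used, so $\Pi_i$ has at least one factor). Since $y_j > G$ for every $j\neq i$ by assumption, multiplying these $n-1$ strict inequalities gives $\Pi_i > G^{\,n-1}$. Next I would substitute the identity $G^{\,n-1} = y_1\cdots y_n = y_i\,\Pi_i$ into this to obtain $\Pi_i > y_i\,\Pi_i$, and then divide both sides by $\Pi_i > 0$ to conclude $1 > y_i$. Since $i$ was arbitrary, this yields $y_i < 1$ for all $i\in[n]$, as claimed.

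I do not expect any real obstacle here: the argument is a one-line manipulation once the partial-product comparison is set up. The only points requiring a moment's care are that all $y_j$ are positive (so that multiplying inequalities and dividing by $\Pi_i$ preserve the strict inequality), and that $n-1\geq 1$ so that $\Pi_i$ is genuinely a nonempty product whose factors all exceed $G$; the case $n=2$ (where $\Pi_i$ is the single factor $y_j$, $j\neq i$, and $G = y_1 y_2$) goes through verbatim.
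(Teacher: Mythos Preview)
Your argument is correct. It is, however, a genuinely different route from the paper's. The paper proves the lemma by induction on $n$: the base case $n=2$ is exactly your observation specialized (from $y_1 > y_1 y_2$ one gets $y_2<1$, and symmetrically); for the inductive step the paper orders $y_1\le\cdots\le y_n$, uses the hypothesis for the smallest element to deduce $y_1^{n-2} > y_2\cdots y_n$, and hence that $y_2,\ldots,y_n$ again satisfy the lemma's hypothesis in dimension $n-1$, concluding by induction (and then $y_1\le y_2<1$). Your approach instead avoids induction entirely by comparing the partial product $\Pi_i$ directly to $G^{n-1}=y_i\Pi_i$; this is shorter and arguably cleaner, and it makes the role of the exponent $n-1$ transparent. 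The inductive proof, on the other hand, shows how the structure propagates to subfamilies, which is mildly informative but not needed here.
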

\begin{proof}
	We prove the argument by induction. The case $n=2$ is obvious since
	\[ y_1 > y_1y_2 \Rightarrow y_2 < 1,\quad y_2 > y_1y_2 \Rightarrow y_1 < 1. \]
	Suppose the claim is true for $n-1$. We now prove it for $n$. W.l.o.g. assume $y_1\leq \cdots \leq y_n$. We have
	\[ y_1^{n-1} > y_1y_2\cdots y_n \Rightarrow \forall i\geq 2:\; y_i \geq y_1 > \sqrt[n-2]{y_2\cdots y_n}. \]
	So $y_2, \ldots, y_n$ satisfy the lemma's conditions, and by the induction hypothesis, we have $y_2,\ldots, y_n < 1$. Since $y_1 \leq y_2$, we also have $y_1 < 1$.
\end{proof}
So far, we know that there is a minimum in this case if $\yb\in(0,1)^n$. The minimum value would be
\begin{align}
	h(\xb, \yb) &= \sum x_iy_i - f(\xb) = \sum (1- P/y_i)y_i - (1 - P) \nonumber \\
		&= \sum y_i - (n-1)P - 1 \label{eq:min-over-interior}
\end{align}
This minimum value, as will be clear shortly, is not the best (lower values are available on other partitions), because of the following lemma:

\begin{lemma}
	Let $\yb\in(0,1)^n$. Then the minimum value of $h(\xb, \yb)$ over $\xb\in (0,1)^n$ is strictly greater than $-1$.
\end{lemma}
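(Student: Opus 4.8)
The plan is to reduce the statement to an elementary uniform lower bound on a sum of two manifestly nonnegative quantities. From the definition $f(\xb) = 1 - \prod_{i=1}^{n}(1-x_i)$ on $[0,1]^n$, for any $\xb \in (0,1)^n$ we have
\[
 h(\xb,\yb) + 1 \;=\; \yb^\top\xb - f(\xb) + 1 \;=\; \sum_{i=1}^{n} x_i y_i + \prod_{i=1}^{n}(1-x_i),
\]
so it suffices to show that the right-hand side is bounded below by a strictly positive constant not depending on $\xb$.

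To this end I would set $y_{\min} := \min_{1\le i\le n} y_i$, which satisfies $0 < y_{\min} < 1$ because $\yb \in (0,1)^n$. Then $\sum_i x_i y_i \ge y_{\min}\sum_i x_i$, and the Weierstrass product inequality (an easy induction) gives $\prod_i(1-x_i) \ge 1 - \sum_i x_i$ for $x_i \in [0,1]$. Writing $S := \sum_i x_i$, I split into two cases. If $S \le 1$, then $\sum_i x_i y_i + \prod_i(1-x_i) \ge y_{\min}S + (1-S) = 1 - (1-y_{\min})S \ge y_{\min}$. If $S > 1$, then already $\sum_i x_i y_i \ge y_{\min}S > y_{\min}$ while $\prod_i(1-x_i) \ge 0$. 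In both cases $\sum_i x_i y_i + \prod_i(1-x_i) \ge y_{\min}$.

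Combining, $h(\xb,\yb) \ge y_{\min} - 1$ for every $\xb \in (0,1)^n$, and since $y_{\min} - 1 > -1$ is a constant independent of $\xb$, the minimum (infimum) of $h(\cdot,\yb)$ over $(0,1)^n$ is strictly greater than $-1$. There is essentially no hard step here: the only point requiring care is that relaxing each $y_i$ to $y_{\min}$ loses precisely the amount recovered by the product inequality in the regime $S \le 1$, while the regime $S > 1$ must be treated separately (there the product term contributes nothing, but it is also not needed). An alternative route would be to invoke continuity of $h(\cdot,\yb)$ on the compact cube $[0,1]^n$ together with density of $(0,1)^n$, reducing the claim to checking that $\min_{[0,1]^n} h(\cdot,\yb) > -1$; but the explicit argument above is self-contained and additionally exhibits the quantitative gap $y_{\min}$.
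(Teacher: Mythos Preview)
Your proof is correct and takes a genuinely different route from the paper. The paper invokes the critical-point formula \eqref{eq:min-over-interior} derived in Case~I and then finishes with an AM--GM inequality on $y_2,\dots,y_n$. You instead bypass the stationarity analysis entirely: the identity $h(\xb,\yb)+1=\sum_i x_iy_i+\prod_i(1-x_i)$ together with the Weierstrass product bound $\prod_i(1-x_i)\ge 1-\sum_i x_i$ and the case split on $S=\sum_i x_i$ yields the uniform lower bound $h(\xb,\yb)\ge y_{\min}-1$. What this buys you is robustness: the paper's argument tacitly assumes that the interior stationary point of $h(\cdot,\yb)$ is a \emph{minimizer}, but in fact the Hessian of $\prod_i(1-x_i)$ is indefinite (e.g.\ for $n=2$ it is $\bigl(\begin{smallmatrix}0&1\\1&0\end{smallmatrix}\bigr)$), so that stationary point is a saddle and the infimum over $(0,1)^n$ is actually approached only at the boundary. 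Your bound $y_{\min}-1$ sidesteps this issue and is in fact sharp: it equals $h(e_j,\yb)$ for $j=\argmin_i y_i$, which is the true infimum over the open cube. The paper's approach, when read charitably, still suffices for the ultimate goal (the Fenchel-dual computation) because Case~II delivers exactly $y_{\min}-1$ anyway; but your argument proves the lemma as stated more cleanly.
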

\begin{proof}
	Because of \eqref{eq:min-over-interior}, we have
	\begin{align*}
		\min_{\xb\in(0,1)^n} h(\xb, \yb) &= \sum_{i=1}^n y_i - (n-1)\sqrt[n-1]{y_1\cdots y_n} - 1 \\
		&> \sum_{i=2}^n y_i - (n-1)\sqrt[n-1]{y_2\cdots y_n} - 1 && \text{since $1 > y_1 > 0$} \\
		&\geq -1 && \text{by AM-GM inequality}
	\end{align*}
\end{proof}

\underline{Case II, at least for one index $i \in [n]$ we have $x_i = 1$:} In this case, we have $f(\xb) = 1$, and $h(\xb, \yb) = \yb^\top \xb - 1$. W.l.o.g. assume $y_1\leq \cdots \leq y_n$. It's clear that the minimum value for $h$ over these values of $\xb$ would be
\[
	\min_{\xb\in[0,1]^n, \exists_i x_i=1} h(\xb, \yb) = \begin{cases}
		y_1 - 1 & \yb \geq \zero \\
		\sum_{y_i < 0} y_i  - 1 & \text{Otherwise}
	\end{cases}.
\]

\underline{Case III, for some $i \in [n]$ we have $x_i = 0$:} In this case, $f$ would have the same form (with $x_i$ deleted) and also $y_i$ is deleted from $h$, and hence the problem is reduced to $n-1$ dimensional case. So the same type of solutions as the previous cases would occur.

In total, for $\yb$, such that $y_1\leq \cdots y_n$ we can write
\begin{align*}
	f_*(\yb) = \begin{cases}
		\sum_{y_i < 0} y_i - 1  & y_1 < 0 \\
		y_1 - 1 & 0 \leq y_1 < 1 \\
		0  &  1 \leq y_1
	\end{cases}
\end{align*}

Now that we have computed the Fenchel dual of $f$, we can compute the Fenchel dual of $f_*$, which would be the Fenchel bi-conjugate of $f$. By definition we have
\[ f_{**}(\zb) = \inf \{ \zb^\top \yb - f_*(\yb)\;|\; \yb\in \RR^n\}. \]
Let's define $g(\yb, \zb) = \zb^\top \yb - f_*(\yb)$. We will find the minimum on the orthant $y_1\leq \cdots \leq y_n$. 

\underline{Case IV, $y_1 \geq 1$:} We have $g(\yb, \zb) = \zb^\top \yb$. If $\zb$ has negative components, then infimum would become $-\infty$. So from now on, we assume $\zb \geq 0$. In this case, the minimum of $g$ over this case is $\sum z_i$.

\underline{Case V, $y_i \in (0,1)$:} We have 
	\[ g(\yb, \zb) = \zb^\top \yb -y_1 + 1 \geq y_1(\textstyle\sum z_i - 1) + 1. \]
	Now if $\sum z_i \geq 1$, the righthand side's minimum would be 1 and this minimum is achieved by setting $\yb = 0$. But if $\sum z_i < 1$, then
	\[ y_1(\textstyle \sum z_i - 1) + 1 \geq \sum z_i - 1 + 1 = \sum z_i, \]
	and this minimum is achieved by setting $\yb = \one$.
	
\underline{Case VI, $y_1 < 0$:} We have 
\begin{align*}
	g(\yb, \zb) &= \zb^\top \yb - \sum_{y_i < 0}y_i + 1 \\
		&= \sum_{i:y_i < 0} y_i(z_i - 1) + \sum_{j:y_j \geq 0} y_jz_j + 1
\end{align*}
If for some $i$, $z_i > 1$, then infimum of $g$ over this case would become $-\infty$, so we have also $\zb \leq \one$. In this case, the minimum would become $1$.

Summing all up, we have:
\begin{align*}
	f_{**}(\zb) = \begin{cases}
		\min \{1, \sum z_i\}  & \zb\in[0,1]^n \\
		-\infty  & \text{Otherwise}
	\end{cases},
\end{align*}
and this is what we wanted to prove.

\section{Pathological Examples}\label{apx:path}
Here we present a special case, where \textsc{Greedy} fails to give a proper solution, but our method works well. Our example would be about influence maximization with partition matroid condition. It is well known that for general matroids (matroids other than the uniform matroid), \textsc{Greedy} is guaranteed to give $1/2$-optimal solution. 

Construct a graph $G = (V, E)$ as follows: $V = \{0,1,2,\ldots, 2N+1\}$, and connect vertices like in the figure. Also take the partitions to be $\{0\}$ and $\{1,2,\ldots,2N+1\}$, meaning that one should take from each partition at most one vertex. 
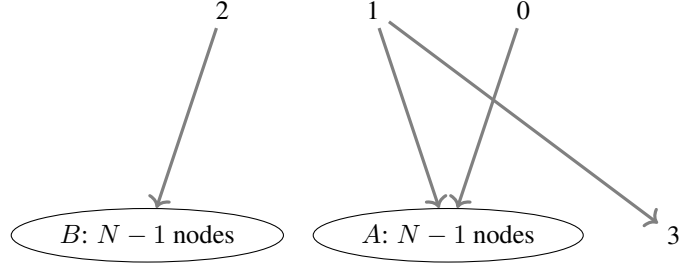
\begin{figure}[hpt]
	\begin{center}
		\begin{tikzpicture}
			\usetikzlibrary{shapes}
			\usetikzlibrary{arrows.meta}
			\begin{scope}[every node/.style={fill=none,draw,ellipse, minimum width=100pt}]
			    \node (A) at (2,0) {$B$: $N-1$ nodes};
			    \node (B) at (6,0) {$A$: $N-1$ nodes};
			\end{scope}
			\node (C) at (9,0) {3};
			\begin{scope}[every node/.style={}]
			    \node (v3) at (7,3) {0};
			    \node (v2) at (5,3) {1};
			    \node (v1) at (3,3) {2};
			\end{scope}
			\begin{scope}[every node/.style={fill=white,circle},
			              every edge/.style={draw=gray,very thick}]
			    \path [->] (v1) edge (A);
			    \path [->] (v2) edge (B);
			    \path [->] (v3) edge (B);
			    \path [->] (v2) edge (C);
			\end{scope}
		\end{tikzpicture}
	\end{center}
	\caption{Pathological graph for Influence Maximization}
\end{figure}

The \textsc{Greedy} algorithm, first chooses the vertex with the highest out-degree, which is $1$, and then is forced to choose $0$ as the second vertex because of matroid condition, leading to the $(1/2+\epsilon)$-optimal answer $\{0, 1\}$.

On the other hand, our algorithm successfully gives the optimal answer $\{0, 2\}$. It's a good practice to show why. Let us choose $\xb^{(0)}$ be the projection of $\one$ on the partition matroid polytope, namely
\[ \xb^{(0)} = (1, \tfrac{1}{2N+1}, \ldots, \tfrac{1}{2N+1}). \]
The (sub-)gradient of $\Fbar$ at $\xb^{(0)}$ is calculated as follows:
\begin{align*}
	\nabla \Fbar(\xb^{(0)}) &= \zero_{\{0\}} + \one_{\{1\}} +\one_{\{2\}} + \one_{\{3,1\}} + \sum_{a\in A} \zero_{\{0,1,a\}} + \sum_{b\in B} \one_{\{2, b\}} \\
	&= (0, 2, N, 1, \underbrace{0, \ldots, 0}_{a\in A},  \underbrace{1, \ldots, 1}_{b\in B})
\end{align*}
where the reason of $\zero_{\{0\}}$ and $\zero_{\{0,1,a\}}$ for $a\in A$ is $\xb_0 = 1$ and $\xb_0 + \xb_1 + \xb_a > 1$ respectively. It's obvious that moving along this gradient, and projecting, makes $\xb_0 = \xb_2 = 1$ and all others to zero, selecting $\{0, 2\}$ as the solution.

The difference here with \textsc{Greedy} is apparently in two places: (i) being on the matroid base polytope forces the algorithm tochoose a vertex from  the first partition, and simultaneously (ii) selecting $0$ means all vertices $a\in A$ are influenced, so there is no need to select $1$. \textsc{Greedy} does not take into account that in the future it should select a node in some other partition, that may lose his achievement in the first step.

\end{document}